\icmltitlerunning{Homeomorphic VAE}
\begin{document}

\twocolumn[
\icmltitle{Explorations in Homeomorphic Variational Auto-Encoding}

\icmlsetsymbol{equal}{*}
\begin{icmlauthorlist}
    \icmlauthor{Luca Falorsi}{equal,to}
    \icmlauthor{Pim de Haan}{equal,to}
    \icmlauthor{Tim R. Davidson}{equal,to}
    \icmlauthor{Nicola De Cao}{to}
    \icmlauthor{Maurice Weiler}{to}
    \\
    \icmlauthor{Patrick Forré}{to}
    \icmlauthor{Taco S. Cohen}{to,to2}
\end{icmlauthorlist}

\icmlaffiliation{to}{University of Amsterdam}
\icmlaffiliation{to2}{Qualcomm AI Research}

\icmlcorrespondingauthor{Luca Falorsi}{luca.falorsi@gmail.com}

\icmlkeywords{Machine Learning, Variational Inference, Bayesian Learning, SO(3), Lie Groups, VAE}

\vskip 0.3in
]
\printAffiliationsAndNotice{\icmlEqualContribution}

\begin{abstract}
The manifold hypothesis states that many kinds of high-dimensional data are concentrated near a low-dimensional manifold.
If the topology of this data manifold is non-trivial, a continuous encoder network cannot embed it in a one-to-one manner without creating holes of low density in the latent space. 
This is at odds with the Gaussian prior assumption typically made in Variational Auto-Encoders (VAEs), because the density of a Gaussian concentrates near a blob-like manifold.

In this paper we investigate the use of manifold-valued latent variables.
Specifically, we focus on the important case of continuously differentiable symmetry groups (Lie groups), such as the group of 3D rotations $\SO3$.
We show how a VAE with $\SO3$-valued latent variables can be constructed, by extending the reparameterization trick to compact connected Lie groups. Our experiments show that choosing manifold-valued latent variables that match the topology of the latent data manifold, is crucial to preserve the topological structure and learn a well-behaved latent space. 
\end{abstract}

\section{Introduction} \label{sec:intro}

Many complex probability distributions can be represented more compactly by introducing latent variables.
Intuitively, the idea is that there is some simple underlying latent structure, which is mapped to the observation space by a potentially complex nonlinear function.
It will come as no surprise then, that most research effort has aimed at using maximally simple priors for the latent variables (e.g. Gaussians), combined with flexible likelihood functions (e.g. based on neural networks).

\begin{figure}[!ht]
\vskip 0.2in
\begin{center}
\centerline{\includegraphics[width=0.75 \columnwidth]{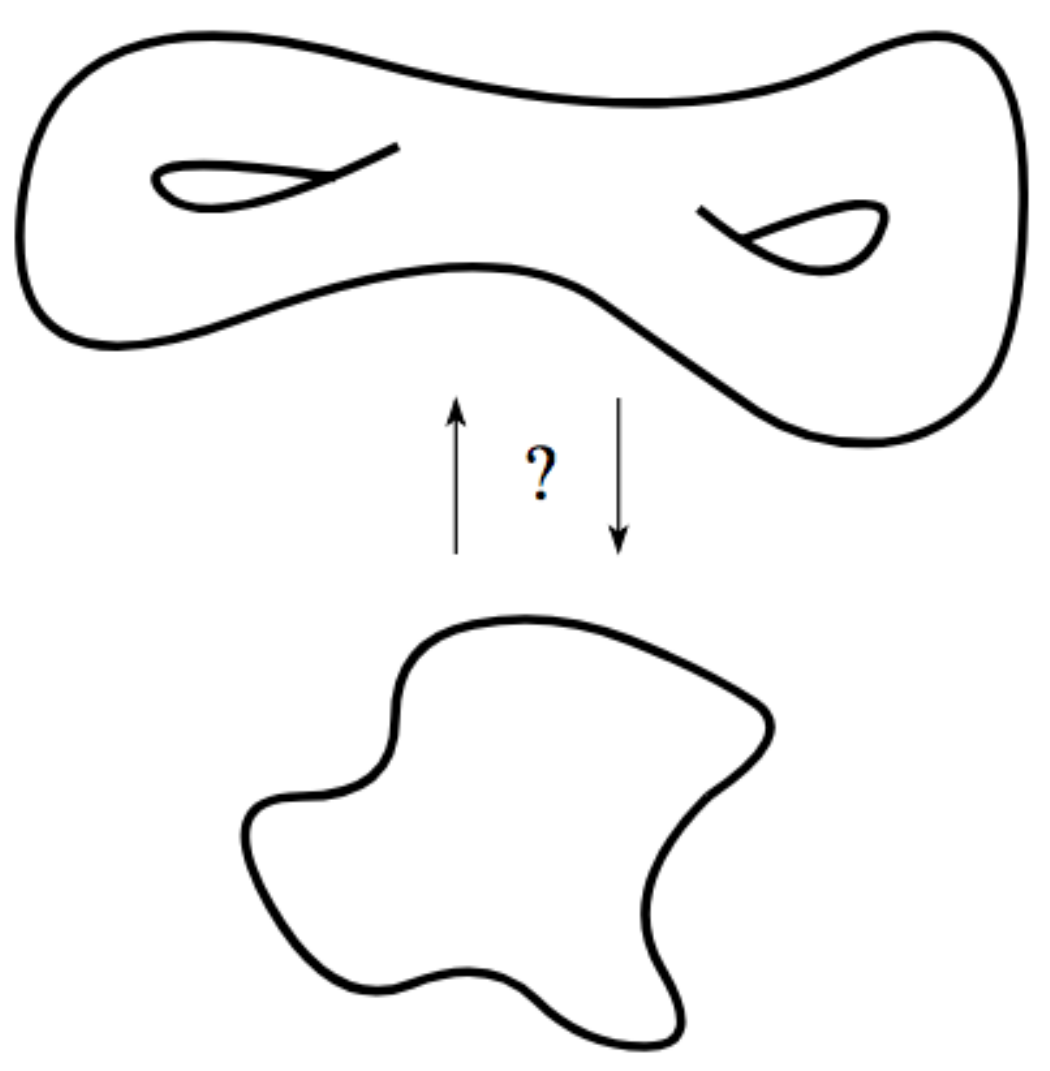}}
\caption{An example of problems that arise in mapping manifolds not diffeomorphic to each other. Notice that in the illustrated example the `holes' in the first manifold, prevent a smooth mapping to the second.}
\label{fig:mani-map}
\end{center}
\vskip -0.2in
\end{figure}

However, it is not hard to see (Fig. \ref{fig:mani-map}) that if the data is concentrated near a low-dimensional manifold with non-trivial topology, there is no continuous and invertible mapping to a blob-like manifold (the region where prior mass is concentrated).
We believe that for purposes of representation learning, the embedding map (encoder) should be homeomorphic (i.e. continuous and invertible, with continuous inverse), which means that although dimensionality reduction and geometrical simplification (flattening) may be possible, the topological structure should be preserved.

Once could encode such a manifold in a higher dimensional flat space with a regular variational auto-encoder (VAE, \citet{KingmaW13-vae, rezende2014stochastic}), rather than learning a homeomorphism. This has two disadvantages. The prior on the flat space will put density outside of the embedding and traversals along the extra dimensions that are normal to the manifold will either leave the decoding invariant, or move out of the data manifold. This is because at each point there will be many more degrees of freedom than the dimensionality of the manifold. 

In this paper we investigate this idea for the special case of Lie groups, which are symmetry groups that are simultaneously differentiable manifolds.
Lie groups include rotations, translations, scaling, and other geometric transformations, which play an important role in many application domains such as robotics and computer vision. More specifically, we show how to construct\footnote{Our implementation is available at \url{https://github.com/pimdh/lie-vae}.} a VAE with latent variables that live on a Lie group, which is done by generalizing the reparameterization trick.

We will describe an approach for reparameterizing densities on $\SO3$, the group of 3D rotations, which can be extended to general compact and connected Lie group VAEs in a straightforward manner. The primary technical difficulty in the construction of this theory is to show that the pushforward measure induced by our reparameterization has a density that is absolutely continuous w.r.t. the Haar measure. Moreover, we show how to construct the encoder such that it can learn a homeomorphic  map from the data manifold to the mean parameter of the posterior. Finally, we propose a decoder that uses the group action to further encourage the latent space to respect the group structure.

We perform experiments on two types of synthetic data: $\SO3$ embedded into a high dimensional space through its group representation, and images of 3D rotations of a single colored cube. We find that a theoretically sound architecture is capable of continuously mapping the data manifold to the latent space. On the other hand, models that do not respect topological structure, and in particular those with a standard Gaussian latent space, show discontinuities when trajectories in the latent space are visualized.
To better study this phenomenon, we introduce a way to measure the continuity of the embedding based on the concept of Lipschitz continuity.
We empirically demonstrate that only  a manifold-valued latent variable with the required topological structure is capable of fully solving the difficult task of the more complicated experiment.

Our main contributions in this work are threefold:
\begin{enumerate}
    \item A reparameterization trick for distributions on the $\SO3$ group of rotations in three dimensions.
    \item An encoder for the mean parameter that learns a homeomorphism between the $\SO3$ manifold embedded in the data and $\SO3$ itself.
    \item A decoder that uses the group action to respect the group structure.
\end{enumerate}

\section{Preliminary Concepts} \label{sec:prelim}
In this section we will first cover a number of preliminary concepts that will be used in the rest of the paper.

\subsection{Variational Auto-Encoders} \label{subsec:vae}

The VAE is a latent variable model, in which $\x$ denotes a set of observed variables, $\z$ stochastic latent variables, and $p(\x,\z)=p(\x|\z)p(\z)$ a parameterized model of the joint distribution called the \emph{generative model}. Given a dataset ${\bf X} = \{ \x_1, \cdots, \x_N\}$, we typically wish to maximize the average marginal log-likelihood $\frac{1}{N}\log p({\bf X}) = \frac{1}{N}\sum_{i=1}^N \log \int p(\x_i, \z_i) d\z$, w.r.t. the parameters. However when the model is parameterized by neural networks, the marginalization of this expression is generally intractable. One solution to overcome this issue is applying variational inference in order to maximize the \emph{Evidence Lower Bound} (ELBO) for each observation:
\begin{align} \label{vae-elbo}
    \log p(\x) &= \log \int p(\x,\z) d\z \nonumber \\
               &\ge \; \E_{q(\z)}[\log p(\x|\z)] - KL(q(\z)||p(\z)),
\end{align}
where the approximate posterior $q(\z)$ belongs to the variational family $\mathcal{Q}$. To make inference scalable an \emph{inference network} $q(\z|\x)$ is introduced that outputs a probability distribution for each data point $\x$, leading to the final objective
\begin{align}
\mathcal{L}(\x ; \theta) =\; \E_{q(\z|\x)}&[\log p(\x|\z)]
- KL(q(\z|\x)||p(\z)),
\end{align}
with $\theta$ representing the parameters of $p$ and $q$.
The ELBO can be efficiently approximated for continuous latent variable $\z$ by Monte Carlo estimates using the \emph{reparameterization trick} of $q(\z | \x)$ \citep{KingmaW13-vae, rezende2014stochastic}.

\subsection{Lie Groups and Lie Algebras}

\paragraph{Lie Group}
A group is a set equipped with a product that follows the four group axioms: the product is closed and associative, there exists an identity element, and every group element has an inverse.
This is closely linked to symmetry transformations that leave some property invariant.
For example, composing two symmetry transformations should still maintain the invariance.
A \emph{Lie group} $G$ has additional structure, as its set is also a smooth manifold.
This means that we can, at least in local regions, describe group elements continuously with parameters.
The number of parameters equals the dimension of the group.
We can see (connected) Lie groups as continuous symmetries where we can continuously traverse between group elements\footnote{We refer the interested reader to \cite{hall2003lie}.}.

\paragraph{Lie Algebra}
The \emph{Lie algebra} $\mathfrak{g}$, of a $N$ dimensional Lie group is its tangent space at the identity, which is a vector space of $N$ dimensions.
We can see the algebra elements as infinitesimal generators, from which all other elements in the group can be created. For matrix Lie groups we can represent vectors $\bm{v}$ in the tangent space as matrices $\mathbf{v}_\times$.

\paragraph{Exponential Map}
The structure of the algebra creates a map from an element of the algebra to a vector field on the group manifold.
This gives rise to the \emph{exponential map} $\exp : \mathfrak{g} \rightarrow G$ which maps an algebra element to the group element at unit length from the identity along the flow of the vector field. The zero vector is thus mapped to the identity. For compact connected Lie groups, such as $\SO3$, the exponential map is surjective.

\begin{figure*}[h!t]
    \centering
    \includegraphics[width=0.75\textwidth]{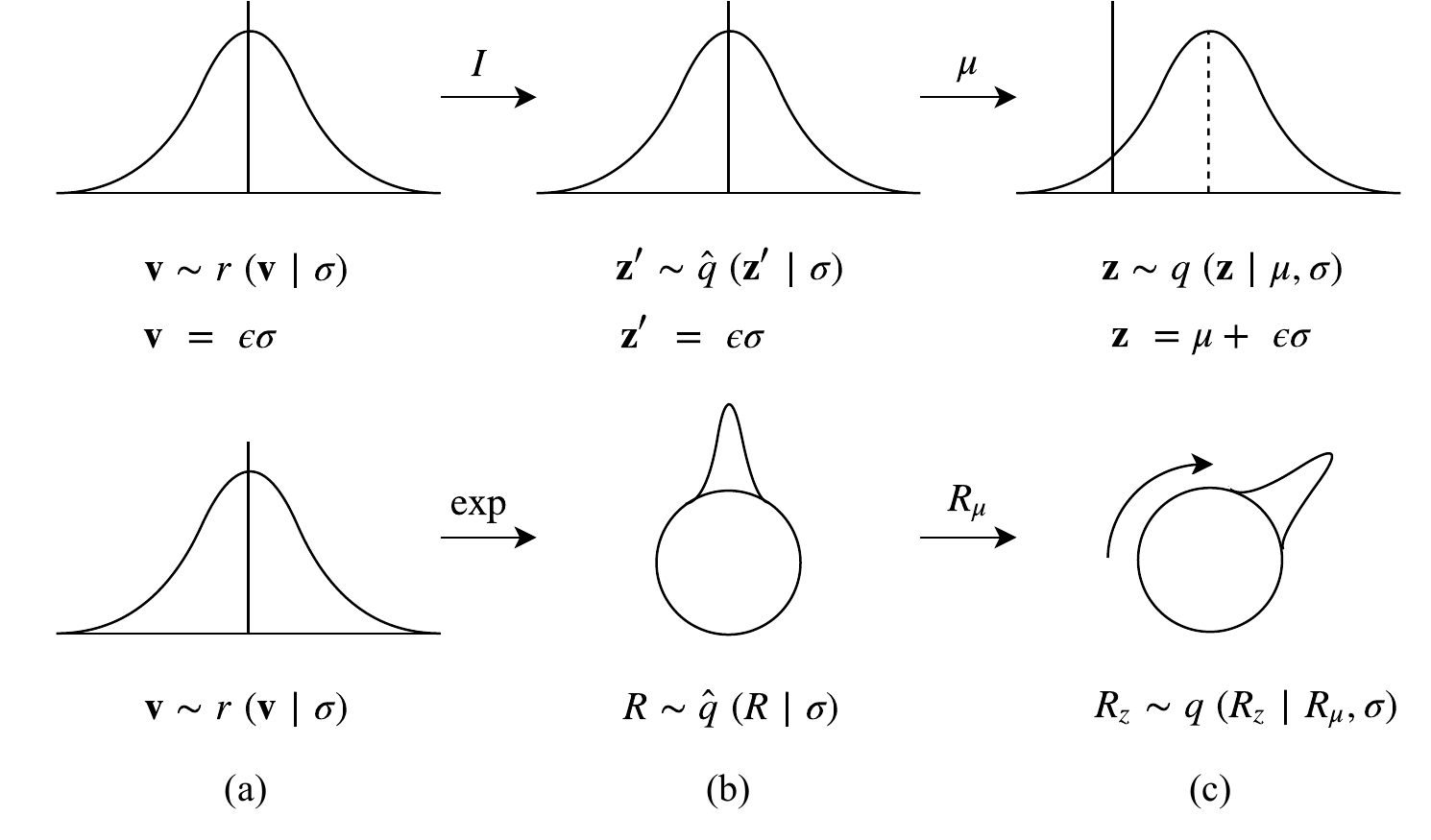}
    \caption{Illustration of our extended reparameterization trick in comparison to the classic reparameterization trick.}
    \label{fig:sample-so3-pipeline}
\end{figure*}

\subsection{The group \SO{3}}

The special orthogonal Lie group of three dimensional rotations $\SO3$ is defined as:
\begin{equation} \label{eq:so3-def}
    \SO3:= \{R \in GL(\mathbb{R}^3) : R^\top R = I \land det(R) = 1 \}    
\end{equation}
where $GL$ is the \emph{general linear group}, which is the set of square invertible matrices under the operation of matrix multiplication. Note that $\SO3$ is not homeomorphic to \Rn, since on \Rn every continuous path can be continuously contracted to a point, while this is not true for $\SO3$. Consider for example a full rotation around a fixed axis.

The elements of Lie algebra $\mathfrak{so}(3)$ of group $\SO3$, are represented by the 3 dimensional vector space of the skew-symmetric $3 \times 3$ matrices. We choose a basis for the algebra:
\begin{align}
   \hspace{-.75em}L_{1,2,3} := \begin{bmatrix} 0 & 0 & 0 \\ 0 & 0 & -1 \\ 0 & 1 & 0 \end{bmatrix}, \begin{bmatrix} 0 & 0 & 1 \\ 0 & 0 & 0 \\ -1 & 0 & 0 \end{bmatrix}, \begin{bmatrix} 0 & -1 & 0 \\ 1 & 0 & 0 \\ 0 & 0 & 0 \end{bmatrix}
\end{align}
This provides a vector space isomorphism between $\mathbb{R}^3$ and $\mathfrak{so(3)}$, written as $[\;\cdot\;]_\times : \mathbb{R}^3 \to  \mathfrak{so(3)}$.

Assuming the decomposition $\mathbf{v}_\times = \theta \mathbf{u}_\times$, s.t. $\theta \in \mathbb{R}_{\ge 0}, \; \|\mathbf{u} \| =1$, the exponential map is given by the Rodrigues rotation formula \citep{rodrigues1840lois}:
\begin{align}
    \exp(\mathbf{v}_\times) = \mathbf{I} + \sin(\theta)\mathbf{u}_\times + 
    (1 - \cos(\theta))\mathbf{u}_\times^2
\end{align}
Since $\SO3$ is a compact and connected Lie group this map is surjective, however it is not injective. 

\section{Reparameterizing $\SO3$} \label{sec:reparam-so3}

In this section we will explain our $\SO3$ reparameterization trick by analogy to the classic version described in \citep{KingmaW13-vae, rezende2014stochastic}. An overview of the different steps and their relation to the classical case are given in Figure \ref{fig:sample-so3-pipeline}.

We sample from a scale-reparameterizable distribution $r(\vv | \sigma)$ on $\mathbb{R}^3$ that is concentrated around the origin.
Due to the isomorphism between $\mathbb{R}^3$ and $\mathfrak{so(3)}$ this can be identified with a sample $\mathbf{v}_\times$ from the Lie algebra $\mathfrak{so(3)}$.
Next we apply the exponential map to obtain a sample $R = \exp(\mathbf{v}_\times) \sim \hat{q}(R | \sigma)$ of the group as visualized in Figure \ref{fig:sample-so3-pipeline} (a) to (b).
Since the distribution $r(\vv | \sigma)$ is concentrated around the origin, the distribution of $\hat{q}(R | \sigma)$ will be concentrated around the group identity.
In order to change the location of the distribution $\hat{q}$, we left multiply $R$ by another element $R_\mu$, see Figure \ref{fig:sample-so3-pipeline} (b) to (c).

To see the connection with the classical case, identify $\mathbb{R}^N$ under addition as a Lie group, with the Lie algebra isomorphic to $\mathbb{R}^N$.
As the group and the algebra are in this case isomorphic, the step of taking the exponential map can be taken as the identity operation such that $r = \hat{q}$.
The multiplication with a group element to change the location corresponds to a translation by $\mu$.

One critical complication is that it is not obvious that the measure we defined above through the exp map has a density function $p : \SO3 \to \mathbb{R}_+$. For this to be the case we need the constructed measure to be absolutely continuous with respect to the Haar measure $\nu$, the natural measure on the Lie group. This is proven by the following theorem.

\begin{thm}
Let $(\mathbb{R}^3,\lambda, \mathcal{B}[\mathbb{R}^3] )$ the real space, provided with the Lebesgue measure on the Borel algebra on $\mathbb{R}^3$. Let $(\SO3,\nu,\mathcal{B}[\SO3])$ the group of 3 dimensional rotations, provided with the normalized Haar measure $\nu$ on the Borel $\sigma$-algebra on $\SO3$. Consider then the probability measure $\mu:\mathcal{B}[\mathbb{R}^3]\to [0,1]$ absolutely continuous w.r.t $\lambda$, with density $r$. Consider the exponential map $\exp: \mathbb{R}^3 \to \SO3$ that is differentiable, thus continuous, thus measurable.
Let then $\exp_*(\mu)$ be the pushforward of $\mu$ by the $\exp$ function. then $\exp_*(\mu)$ is absolutely continuous with respect of the Haar measure $\nu$ ($\exp_*(\mu) \ll \nu$). 
\end{thm}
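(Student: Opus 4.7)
The plan is as follows. Since $\mu \ll \lambda$, showing $\exp_*(\mu) \ll \nu$ reduces to the purely geometric statement that every $\nu$-null Borel set $B \subseteq \SO3$ has a $\lambda$-null preimage $\exp^{-1}(B) \subseteq \mathbb{R}^3$: this immediately yields $\exp_*(\mu)(B) = \mu(\exp^{-1}(B)) = 0$. So everything hinges on comparing the Haar measure with Lebesgue through the map $\exp$.

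First, I would locate the critical set of $\exp$. Using the Rodrigues formula (or the general Lie-group formula $\det\bigl((1-e^{-\mathrm{ad}_X})/\mathrm{ad}_X\bigr)$ applied to the eigenvalues $0,\pm i\theta$ of $\mathrm{ad}_\mathbf{v}$), one computes
\begin{equation*}
|\det d\exp_\mathbf{v}| \;=\; \frac{4\sin^2(|\mathbf{v}|/2)}{|\mathbf{v}|^2},
\end{equation*}
extended continuously to $1$ at the origin. The critical set is therefore
\begin{equation*}
C \;:=\; \bigcup_{k \ge 1}\bigl\{\mathbf{v}\in\mathbb{R}^3 : |\mathbf{v}| = 2\pi k\bigr\},
\end{equation*}
a countable union of spheres, so $\lambda(C)=0$ and the part of $\exp^{-1}(B)$ sitting in $C$ is automatically negligible.

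Next, on the open regular set $\mathbb{R}^3 \setminus C$ the differential of $\exp$ is invertible everywhere, so the inverse function theorem gives, around every point, an open neighborhood on which $\exp$ is a diffeomorphism. Using second countability of $\mathbb{R}^3$, I would extract a countable cover $\{U_i\}$ of $\mathbb{R}^3 \setminus C$ by such neighborhoods, each mapping diffeomorphically onto $V_i := \exp(U_i) \subseteq \SO3$. On each $U_i$ the change-of-variables formula identifies the pullback of $\nu|_{V_i}$ with the measure of smooth positive density $|\det d\exp_\mathbf{v}|$ against $d\lambda$; because this density is strictly positive on $U_i$, the restrictions $\lambda|_{U_i}$ and $\exp^*(\nu|_{V_i})$ are mutually absolutely continuous, and in particular $\nu(B)=0$ forces $\lambda(\exp^{-1}(B)\cap U_i)=0$.

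Finally, countable subadditivity closes the argument:
\begin{equation*}
\lambda(\exp^{-1}(B)) \;\le\; \lambda(C) + \sum_i \lambda(\exp^{-1}(B)\cap U_i) \;=\; 0.
\end{equation*}
The main technical obstacle is identifying the critical locus of $\exp$ and verifying that the Haar measure is locally given by a smooth positive density in exponential coordinates; the remainder is routine measure theory. A conceptual pitfall worth flagging is that Sard's theorem alone would only control the critical \emph{values} in $\SO3$, not the critical \emph{points} in $\mathbb{R}^3$, so it cannot be invoked as a shortcut — one really has to compute $|\det d\exp_\mathbf{v}|$ explicitly.
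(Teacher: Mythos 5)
Your proof is correct, but it takes a genuinely different route from the paper's. The paper decomposes $\mathbb{R}^3$ into the shells $A_k = \{\,x : \|x\|\in(k\pi,(k+1)\pi)\,\}$, on each of which $\exp$ is \emph{globally injective}, discards the spheres $\{\|x\|=k\pi\}$ as $\lambda$-null, and applies the change-of-variables formula shell by shell. This is more work than strictly needed for absolute continuity, but it is constructive: it directly yields the explicit Radon--Nikodym density $\hat q(R)=\sum_{v\in\exp^{-1}(R)} r(v)\,|J_{\exp}(v)|^{-1}$, which the paper needs downstream to evaluate the ELBO. Your argument instead removes only the genuinely critical spheres $\{\|\mathbf v\|=2\pi k\}$ where $\det d\exp_{\mathbf v}$ vanishes, then covers the regular set by countably many charts on which $\exp$ is a local diffeomorphism (inverse function theorem plus second countability), and closes with countable subadditivity. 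This is the cleaner and more conceptual argument if the only goal is $\exp_*\mu\ll\nu$: it requires no explicit determination of the injectivity domain of $\exp$ (only its critical locus), and it generalizes immediately to any smooth surjection between manifolds whose critical set is Lebesgue-null. The trade-off is that it does not hand you the density formula; to obtain it you would still have to reorganize the local charts into the paper's global shells (or run the coarea formula). Your remark about Sard's theorem is well taken and correctly diagnoses why the explicit Jacobian $\frac{2-2\cos\|\mathbf v\|}{\|\mathbf v\|^2}=\frac{4\sin^2(\|\mathbf v\|/2)}{\|\mathbf v\|^2}$ is genuinely needed rather than a soft argument.
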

\begin{proof}
    See Appendix \ref{appendix:pushforward-so3}
\end{proof}
As further derived in Appendix \ref{appendix:pushforward-so3} this implies the pushforward measure on $\SO3$ to be absolutely continuous w.r.t. to the Haar measure where the density 
\begin{equation}
\hat q(R|\sigma) = \sum_{k \in \mathbb{Z}} r\left(\frac{\log(R)}{\theta(R)}(\theta(R) + 2k\pi) \bigg| \sigma\right)\frac{(\theta(R) + 2k\pi)^2}{3 - \text{tr}({R})},
\end{equation}
is defined almost everywhere.
Here $R\in \SO3$ and
\begin{equation}
\theta({R}) = \|\log(R)\| = \cos^{-1}\left(\frac{\text{tr}(R)-1}{2}\right)
\end{equation}
Further, $\log(\cdot)$ is defined as a principal branch and maps back the group element to the unique Lie algebra element next to the origin.
Notice that even if the density is singular at $R=I$, it still integrates to 1.
After rotating $R$ by left multiplying with another $\SO3$ element $R_\mu$, we obtain the final sample:
\begin{equation}
    R_z \sim q(R_z | R_\mu, \sigma) = \hat{q}(R_\mu^\top R_z | \sigma),
\end{equation}
where the second step is valid because of the left invariance of the Haar measure.

\paragraph{Kullback-Leibler Divergence}
The KL divergence, or \emph{relative entropy} can be decomposed into the \emph{entropy} and the \emph{cross-entropy} terms, $KL(q || p) = \mathbb{H}(q, p) - \mathbb{H}(q)$. Since the Haar measure is invariant to left multiplication, we can compute the entropy of the distribution $\hat{q}$ instead of $q$. As we have the expression of the density, the entropy can be computed using Monte Carlo samples:
\begin{align}
\mathbb{H}(q) &= \mathbb{H}(\hat{q}) \approx -\frac{1}{N}\sum_{i=1}^N \log \hat{q}(R_i|\sigma), \quad R_i \sim \hat{q}(R_i|\sigma) \nonumber
            \\
            &= -\frac{1}{N}\sum_{i=1}^N \log  \hat{q}(\exp(\vv_i)|\sigma) \nonumber
            \\
            &= -\frac{1}{N}\sum_{i=1}^N \log \sum_{k \in \mathbb{Z}} r\Bigl(\frac{\vv_i}{\|\vv_i\|} (\|\vv_i\|+ 2k\pi)|\sigma\Bigl) \cdot \nonumber
            \\
            & \hspace{5em} \frac{(\|\vv_i\|+ 2k\pi)^2 }{2 - 2\cos(\|\vv_i\|)},
\quad \vv_i \sim r(\vv_i|\sigma)
\end{align}
Notice that the last expression only depends on the samples taken on the Lie algebra. We found that one sample suffices when mini-batches are used. In general the cross-entropy term can be similarly approximated by MC estimates. However, in the special but important case of a uniform prior, $p$, the cross-entropy reduces to: $\mathbb{H}(p, q) = - \log \left(\frac{1}{8\pi^2} \right) $.

\section{Encoder and Decoder networks}

Having defined the reparameterizable density $q(R_z | R_\mu, \sigma)$, we need to design encoder networks which map elements from the input space $\mathcal{X}$ to the reparameterization parameters $R_\mu$, $\sigma$ and decoder networks which map group elements to the output prediction.

\subsection{Homeomorphic Encoder} \label{sec:encoder}

We split the encoder network in two parts $\text{enc}^\mu$ and $\text{enc}^\sigma$, which predict reparameterization parameters $R_\mu$ and $\sigma$ respectively.
Since $\sigma$ are parameters of a distribution in $\mathbb{R}^3$, the corresponding network $\text{enc}^\sigma$ does not pose any problems and can be chosen similarly as in classical VAEs.
However, special attention needs to be paid to designing $\text{enc}^\mu$ which predicts a group element $R_\mu \in \SO3$.

We consider the data as lying in a lower dimensional manifold $\mathcal{M}$, embedded in the input space $\mathcal{X}$. In our particular problem the manifold $\mathcal{M}$ is assumed to be generated by $\SO3$, acting on a canonical object and a subsequent projection into ambient space (e.g. pixel space) $\mathcal{X}$ which, for simplicity we assume to be injective.
This means that we can make the simplifying assumption that $\SO3$ can be recovered from its image in $\mathcal{X}$, i.e. that the map $\SO3 \to \mathcal{M}$ is a homeomorphism. The encoder is now meant to learn the inverse map, i.e. to learn a map from $\mathcal{X}$ to $\SO3$, which when restricted to $\mathcal{M}$ is a homeomorphism and thus preserves the topological structure of $\SO3$.

In general there is no standard way to define and parameterize the class of functions which are guaranteed to have these properties by design via a neural network.
Instead we will give a general way to build $\text{enc}^\mu$ {\it capable} of learning such a mapping.
We divide the encoder network in two functions: $\text{enc}^{\mu}= \pi\circ f$, where $f : \mathcal{X} \to \mathcal{Y}$, for some space $\mathcal{Y}$, is parametrized by a neural network, and $\pi : \mathcal{Y} \to \SO3$ is a fixed surjective  function.
Not any space $\mathcal{Y}$ or function $\pi$ is suited: since neural networks can only model continuous functions, a necessary condition on $\mathcal{Y}$ for $\text{enc}^\mu$ to be able to learn to be a homeomorphism (when its domain $\mathcal{X}$ is restricted to $\mathcal{M}$), is that there exists an embedding $i:\SO3\to \mathcal{Y}$. Then by definition a function $\pi$ exist, such that  $\pi|_{i(\SO3)}=i^{-1}|_{i(\SO3)}$ is a homeomorphism. Any extension of $\pi|_{i(\SO3)}$ to $\mathcal{Y}$ is a suitable candidate.
Moreover, if we choose $\mathcal{X}=\mathbb{R}^n$ and $\mathcal{Y}=\mathbb{R}^m$ for some $n, m$, then some continuous $f : \mathcal{X} \to \mathcal{Y}$ exists (which we can approximate with neural networks) such that an appropriate $\text{enc}^\mu = \pi \circ f$ exist. Several choices for $\mathcal{Y}$ and $\pi$ are proposed in Appendix \ref{ap:mean-param} and investigated in the experimental section \ref{sec:experiments}.

\subsection{Group Action Decoder} \label{sec:group-act-deco}

Our decoder $p(\x | R_z)$ must be capable to map a group element and optionally additional latent structure back to the original high dimensional input space.
When the factor of variation in the input is the pose of an object, and we learn a latent variable $R_z \in \SO3$, we desire that a transformation $R\in \SO3$ applied to a latent object representation $\z$ results in a corresponding transformation of the pose of the decoded object. The task of the decoder is thus to learn a three dimensional representation of the object, to rotate it according to the latent variable and finally to project it back to the two dimensional frame of the input image. A naive approach could be to simply provide the 9 elements of the rotation matrix to a neural network. However, although it may learn to reconstruct the input well, it provides no guarantee that the latent space accurately reflects the pose variations of the object. Therefore, we like to make the method more explicit.

Hypothetically, one could learn a vector-valued signal on the sphere, $f : S^2 \rightarrow \mathbb{R}^N$, to represent the three dimensional object in the input, as 3D shapes can be well represented by its information projected to a sphere \citep{s.2018spherical}. The decoder can rotate this signal with the latent variable, before projecting it back to pixel space. A major downside of this approach is that parameterizing and projecting a function on the sphere is highly non-trivial.

\begin{figure} 
    \centering
    \includegraphics[width=0.4\textwidth]{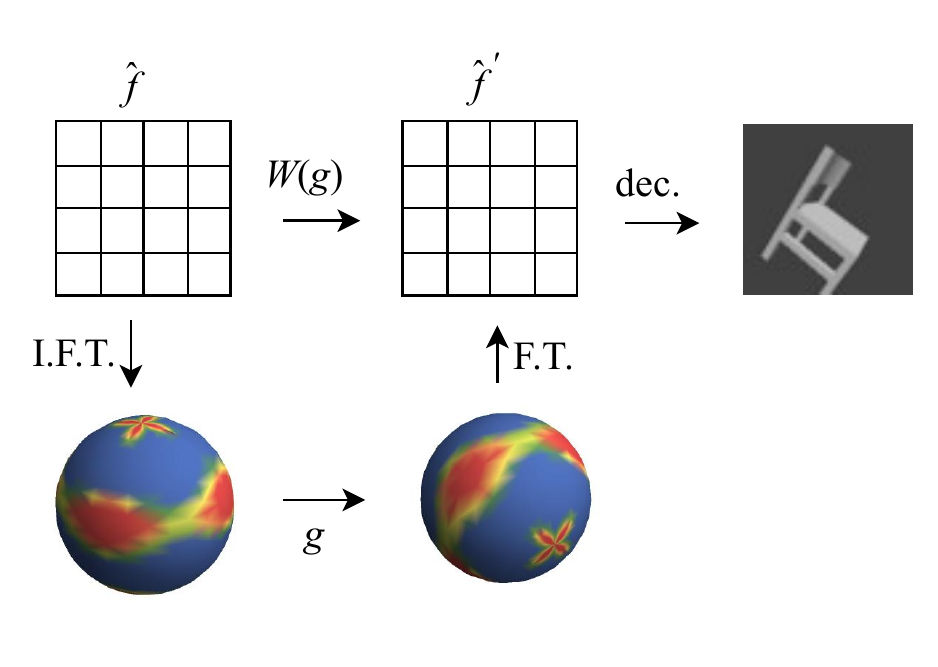}
    \caption{The encoder infers the $R \in \SO3$ and Fourier modes $\hat{f}$ if working with multiple objects, otherwise $\hat{f}$ is a parameter. Shown is the commutative diagram between taking the Inverse Fourier Transform, rotating the result and taking the Fourier Transform, and acting with the group representation $W$ on $\hat{f}$. The decoder maps the transformed $\hat{f}'$ to pixels.}
    \label{fig:action-decoder}
\end{figure}

Alternatively, we propose a method based on the representation theory of groups \citep{hall2003lie}. Rather than learning the function $f$, we directly learn its (band-limited) Fourier modes, which form a simple vector space. It can be shown \citep{chirikjian2000engineering} that rotations of a signal on the sphere correspond to a linear transformation of the Fourier modes. The transformed Fourier modes are subsequently fed through an image generative network, and the linear transformation is the Wigner-D-matrix, which is a function of the $\SO3$ element. Technically, the Wigner-D-matrices form representations of the group. This means that as mapping to the linear transformations is a homomorphism, it preserves the group structure: $D(g)D(g') = D(gg')$, for $g, g' \in \SO3$ and $D(g)$ a Wigner-D-matrix. This method encourages the latent space to represent the actual pose of the input, while only requiring the construction of the $W$ matrices and performing a linear transformation. We refer to Figure \ref{fig:action-decoder} and Appendix \ref{appendix:group-action} for details.

An additional advantage is that this decoder allows for disentangling of content and pose, as it is forced to encode the pose in a meaningful way. The Fourier modes are in that case also generated by the encoder. We leave this for future work.

\begin{figure*}[!ht]
\centering
\subfigure[Ground truth]{\includegraphics[height=0.15\textwidth]{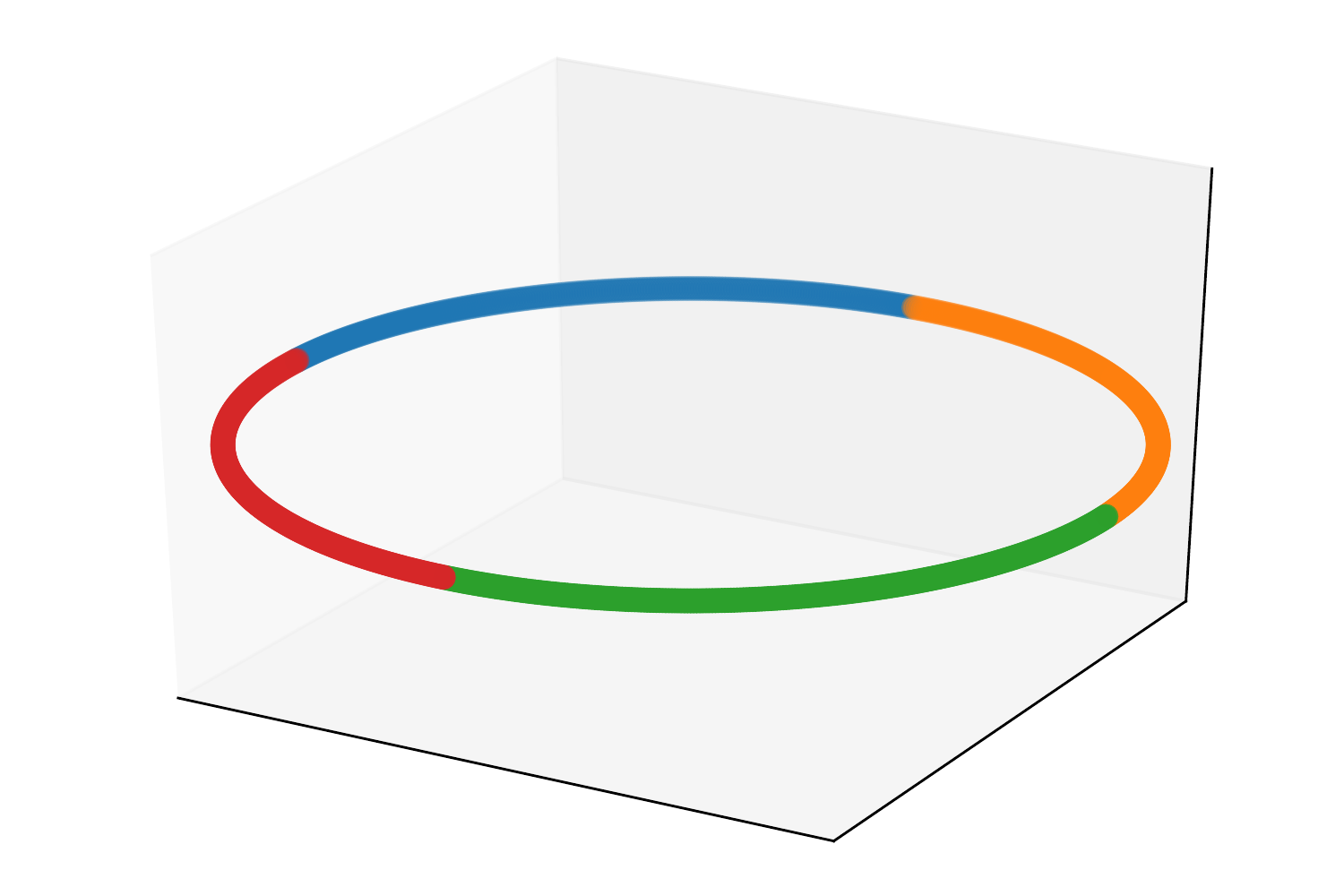}}
\subfigure[$\SO3$-action-s2s2 ]{\includegraphics[height=0.15\textwidth]{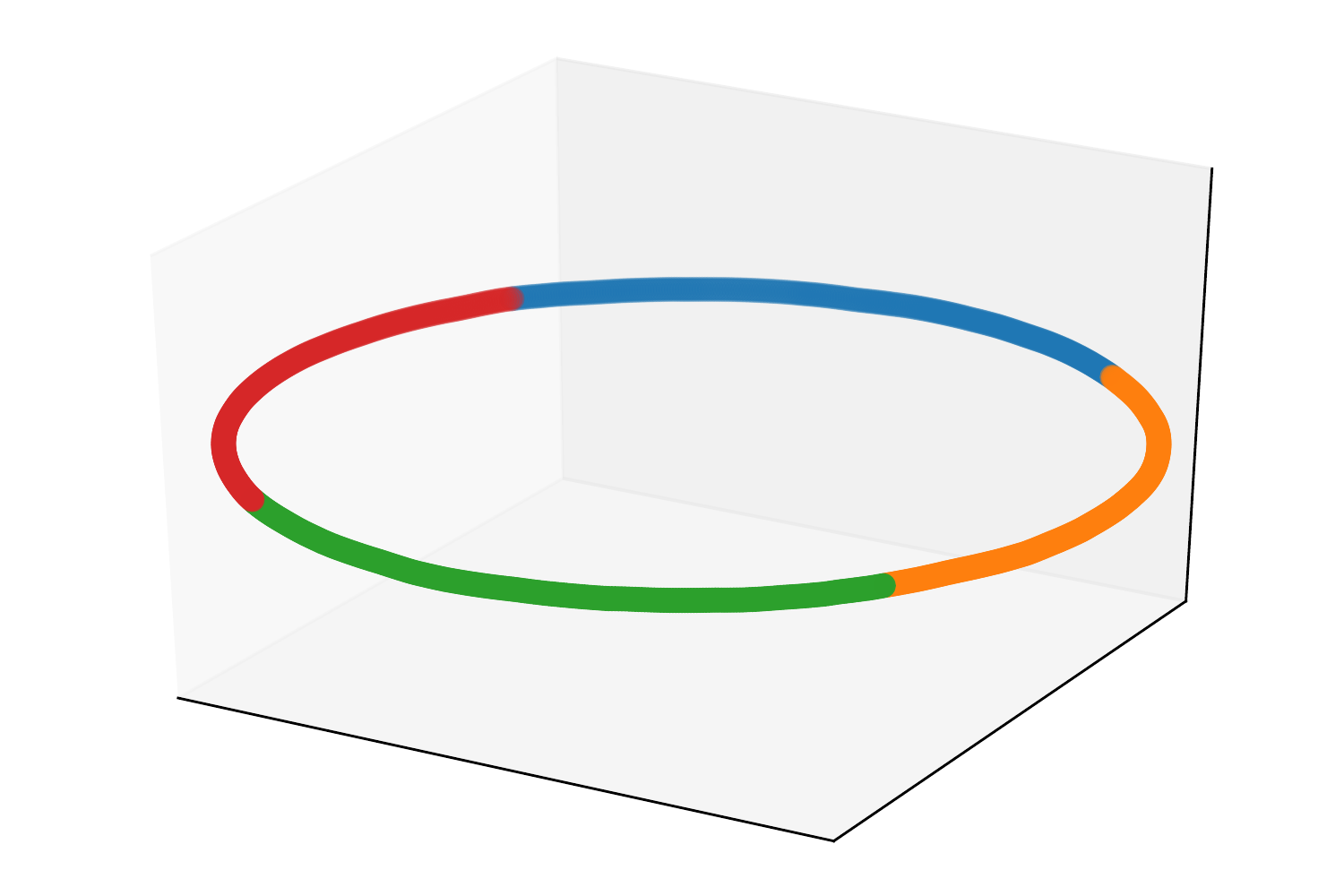}}
\subfigure[$\SO3$-action-alg ]{\includegraphics[height=0.15\textwidth]{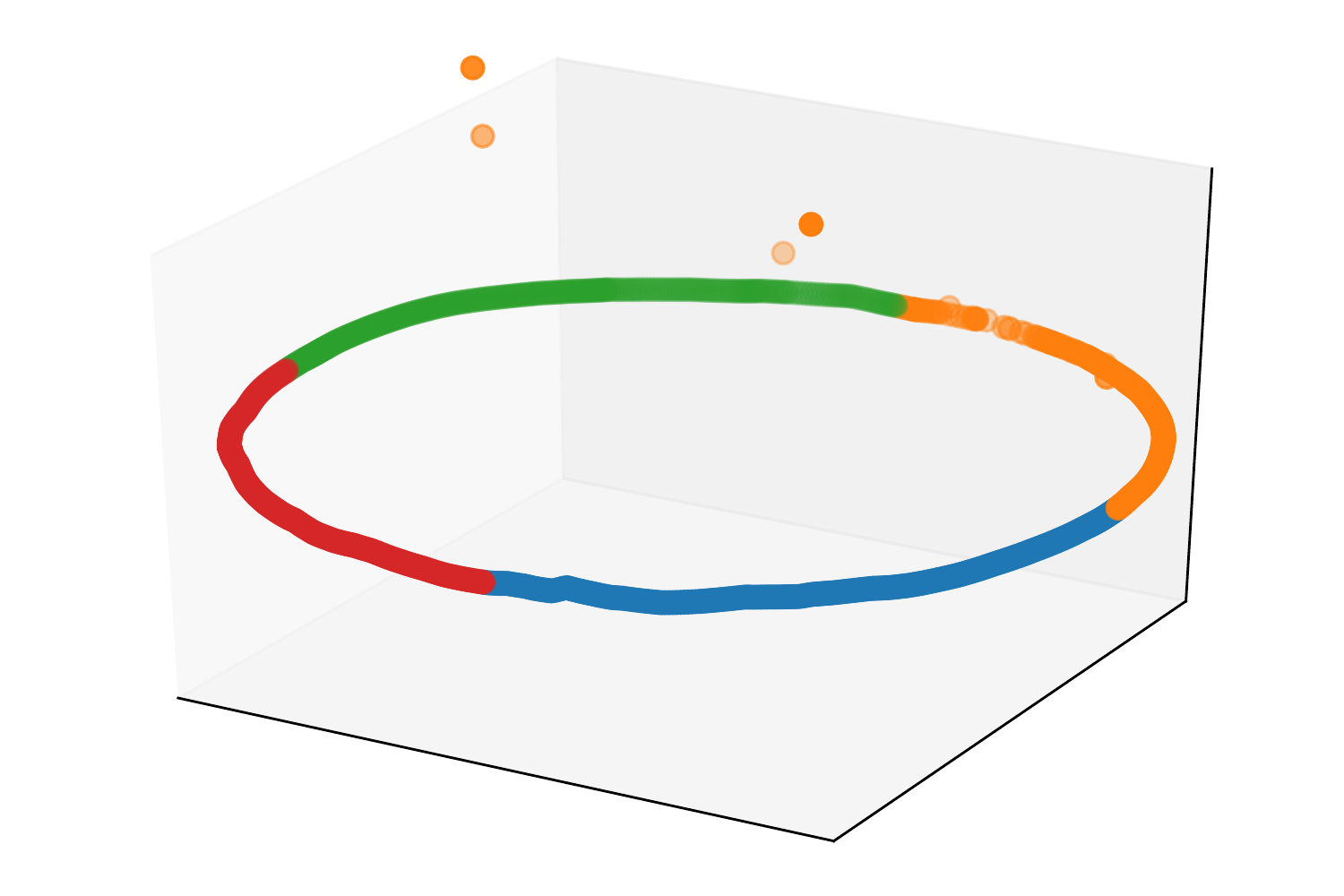}}
\subfigure[$\SO3$-action-q ]{\includegraphics[height=0.15\textwidth]{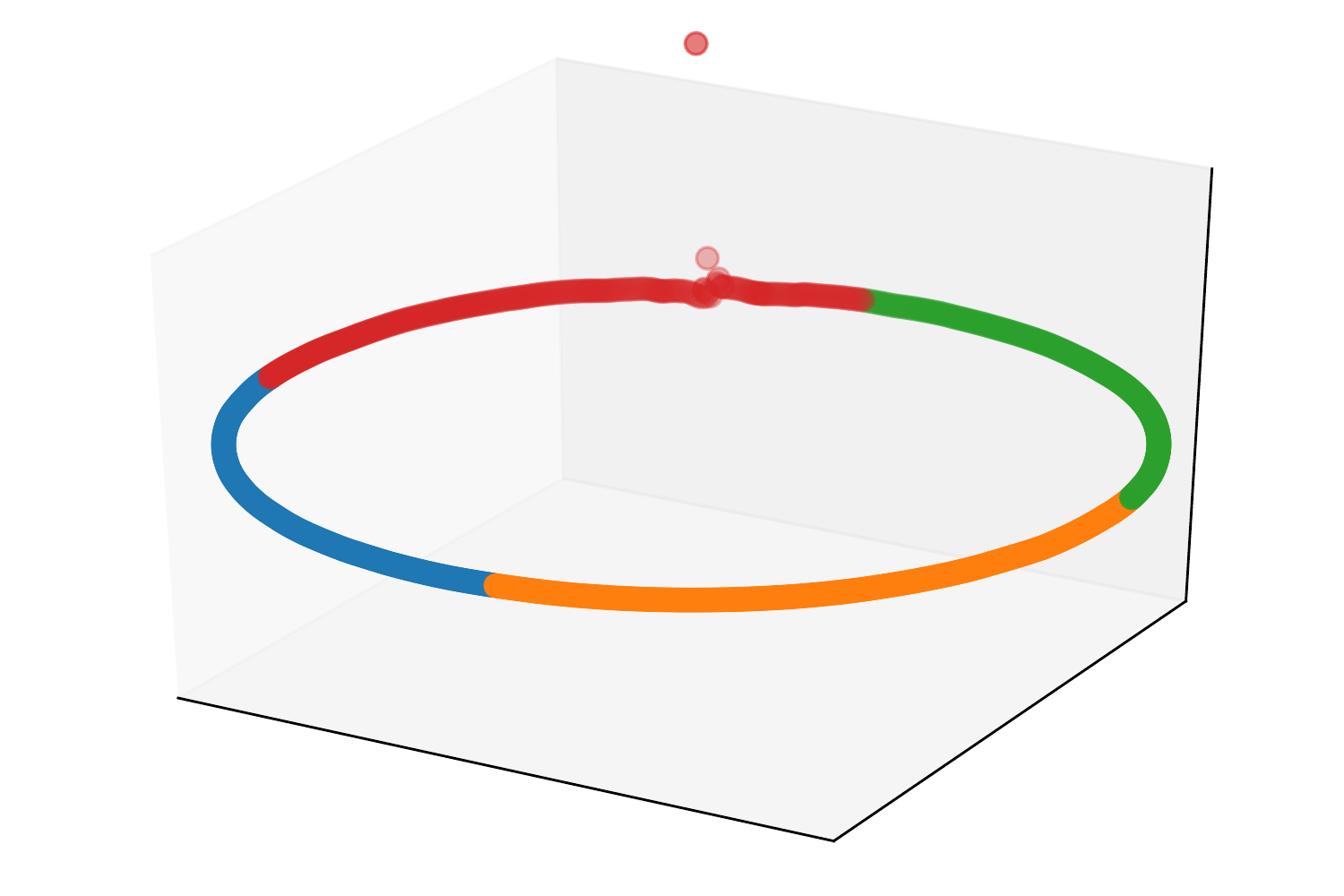}}
\subfigure[$\SO3$-action-s2s1]{\includegraphics[height=0.15\textwidth]{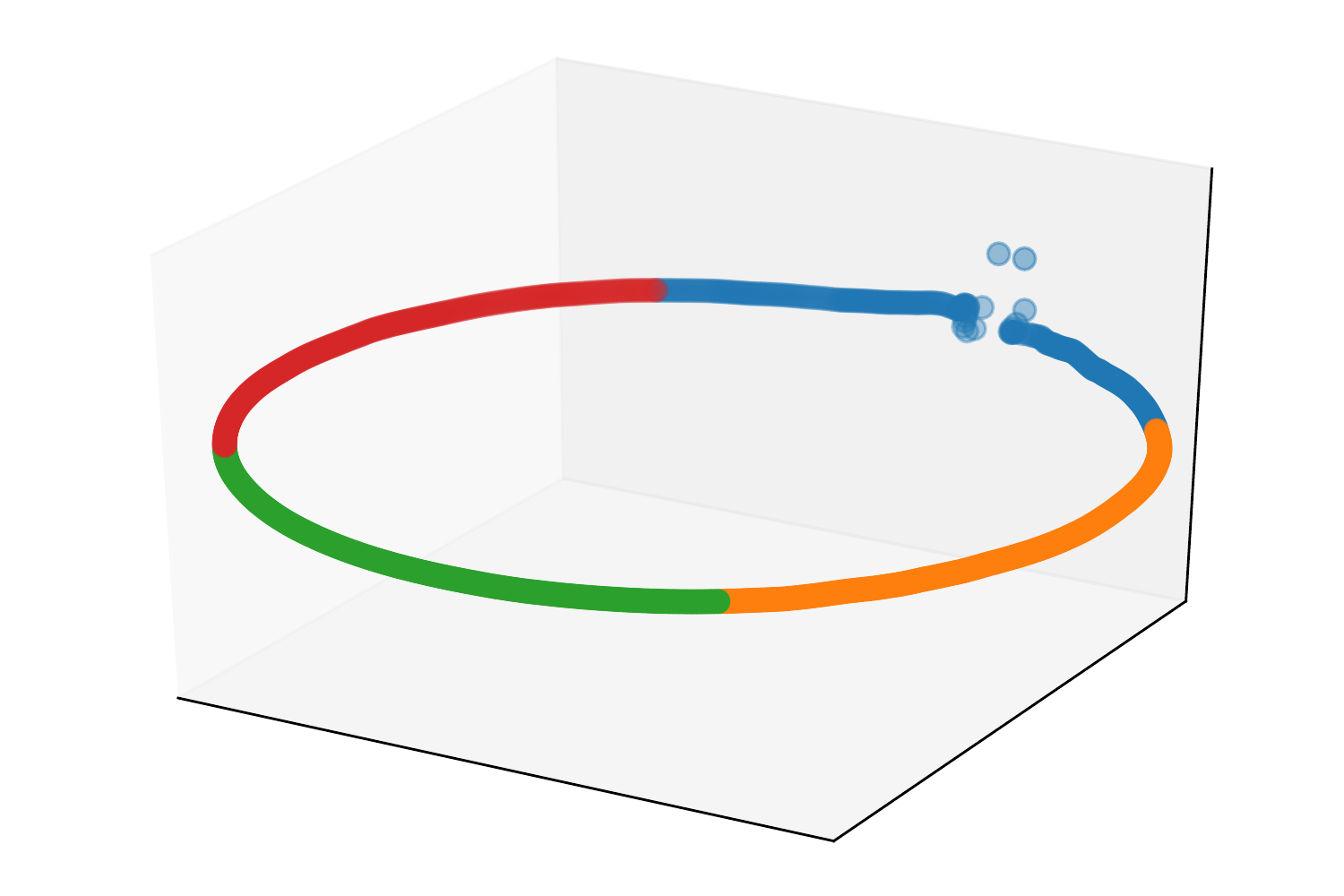}}
\subfigure[$\mathcal{N}$-action-3-dim]{\includegraphics[height=0.15\textwidth]{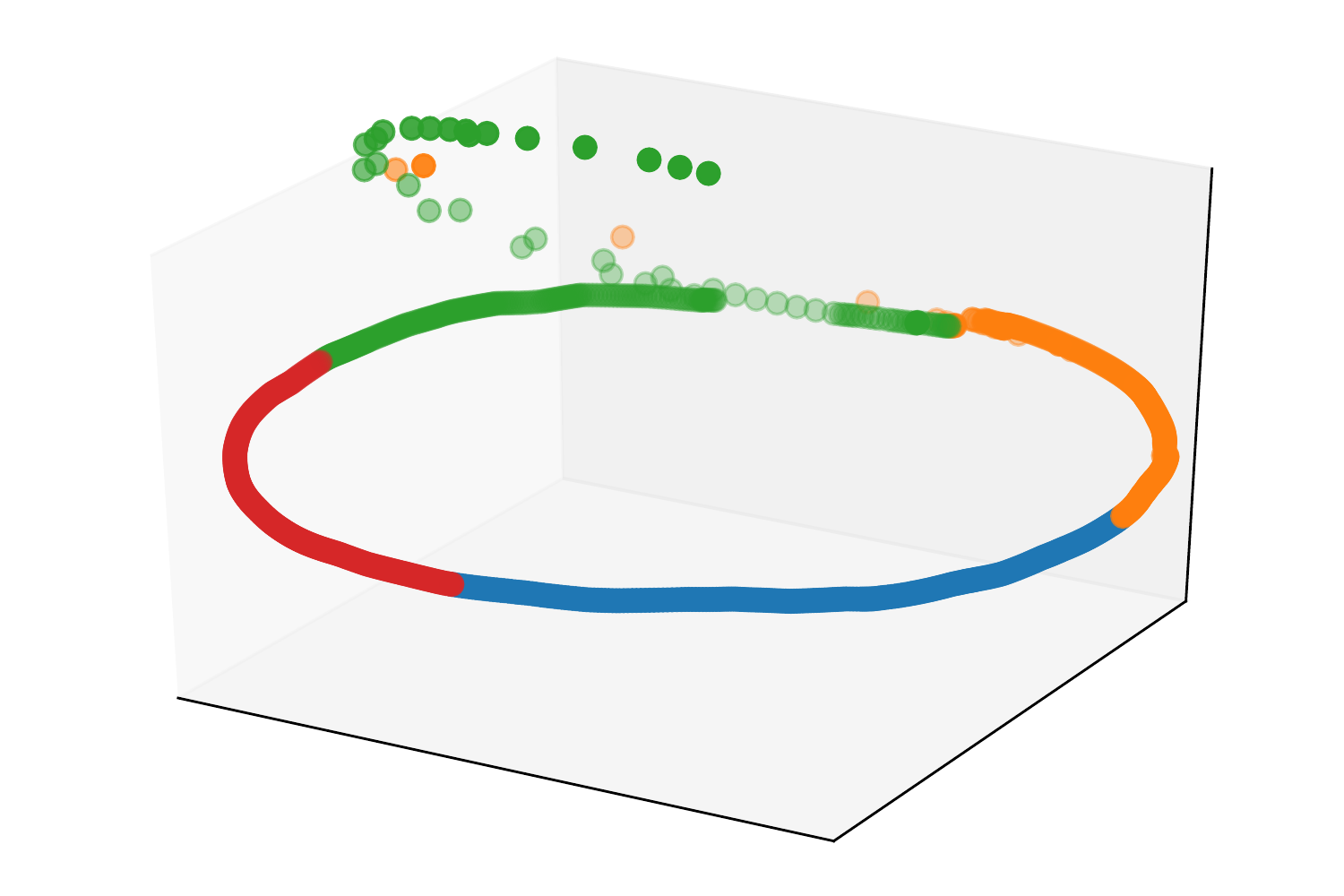}}
\subfigure[$\mathcal{S}$-action-3-dim]{\includegraphics[height=0.15\textwidth]{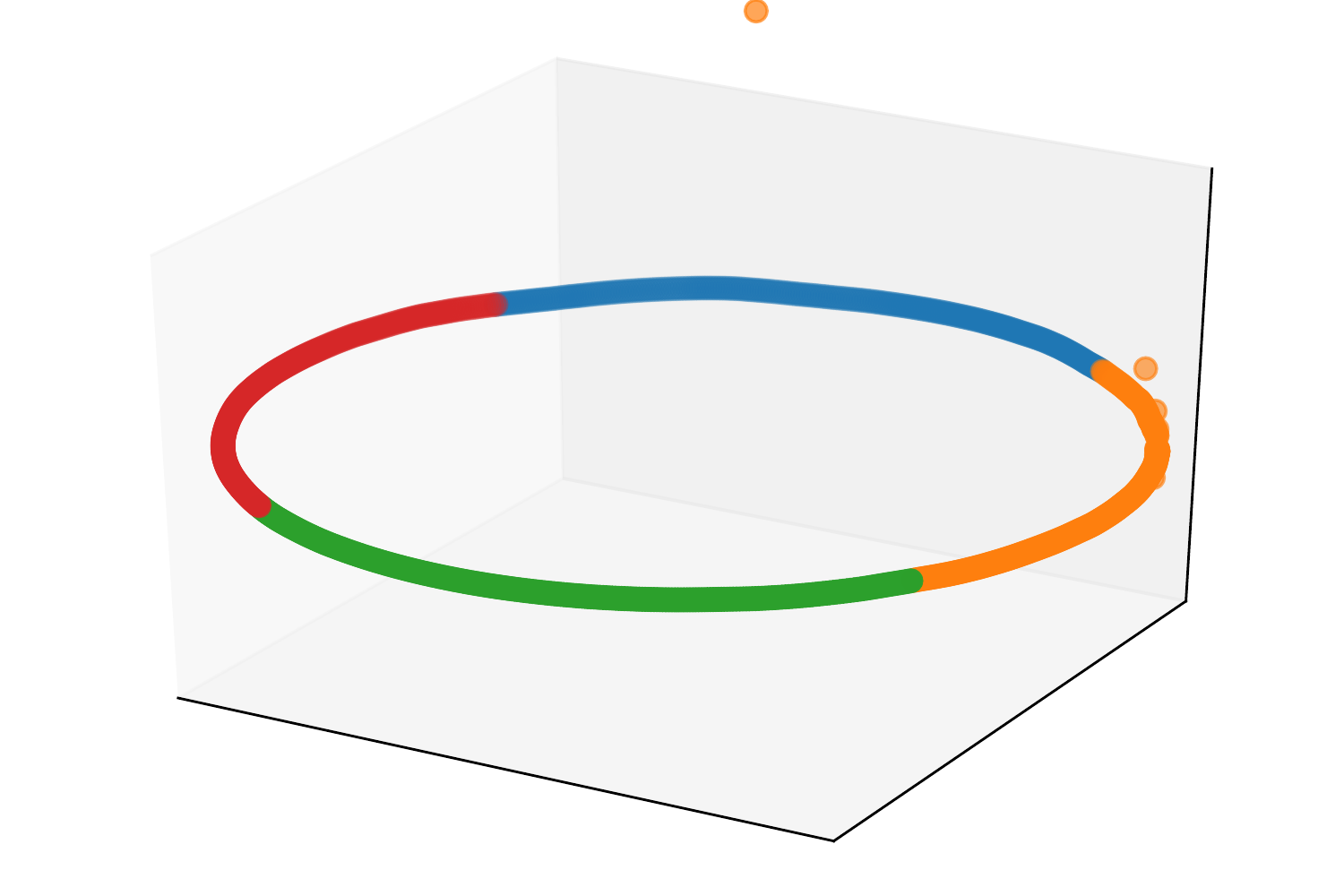}}
\caption{The latent encoding of a $S^1$ trajectory in the Toy data set. The $\SO3$ elements are mapped to $\mathbb{R}^9$ by taking the rotation matrix elements and are subsequently mapped to 3D by Principal Component Analysis.}
\label{fig:toy-latent}
\end{figure*}

\section{Related Work} \label{sec:related-work}

As VAEs utilize VI to recover some distribution on a latent manifold responsible for generating the observed data, the majority of extensions is focused on increasing the flexibility of the prior and approximate posterior. Although the majority of approaches make use of a normal Gaussian prior, recently there has been a surge to provide additional options to offset some of this distribution's perceived limitations. \citet{vamp-prior} propose to directly tie the prior to the approximate posterior and learn it as a mixture over approximate posteriors. \citet{stick} introduce a non-parametric prior applying a truncated stick-breaking method. Research to support discrete latent variables was done in \citet{Jang2016CategoricalRW-gumbel, maddison2016concrete}, while in \citet{naesseth2017reparameterization, figurnov2018implicit} recently novel techniques were introduced to reparameterize a suite of continuous distributions. In \citep{s-vae}, the reparameterization technique of \citet{naesseth2017reparameterization} is extended to explore the properties of the hyperspherical von Mises-Fisher distribution to better capture intrinsically hyperspherical data. This is done in the context of avoiding \emph{manifold mismatches}, and as such is closely related to the motivation of this work. 

The predominant procedure to generate a more complex approximate posterior is through \emph{normalizing flows} \citep{normalizing-flows}, in which a class of invertible transformations is applied sequentially to a reparameterizable density. This general idea has later been extended \citet{kingma2016improved, berg2018sylvester}, to improve flexibility even further. As this framework does not hold any specific distributional requirements on the prior besides being reparameterizable, it would be interesting to investigate possible applications to $\SO3$ in future work.

The problem of defining distributions on homogeneous spaces, including Lie groups, was investigated in \cite{chirikjian2000engineering, chirikjian2010information, Chirikjian2016-ch, Chirikjian2012-gr}.
\citet{cohen2015harmonic} devised harmonic exponential families which are a powerful family of distributions defined on homogeneous spaces.
These works did not concentrate on making the distributions reparameterizable.

Rendering complex scenes from multiple poses has been explored in \citep{eslami1204}. However, this work assumes access to ground truth poses and does not do unsupervised pose learning as in the presented framework.

The idea of incorporating prior knowledge on mathematical structures into machine learning models has proven fruitful in many works.
\citet{s.2018spherical} adapt convolutional networks to operate on spherical and $\SO3$ valued data.
Equivariant networks, investigated in \citet{cohen2016group,cohen2016steerable,Worrall2017-HNET,Weiler2018-STEERABLE,Cohen2018-ex}
reduce the complexity of a learning task by taking a quotient over group orbits which explain a subset of dimensions of the data manifold.

\section{Experiments} \label{sec:experiments}

\begin{figure*}[!ht]
\centering
\subfigure[10 dimensional Normal latent space with MLP decoder]{\includegraphics[width=\textwidth]{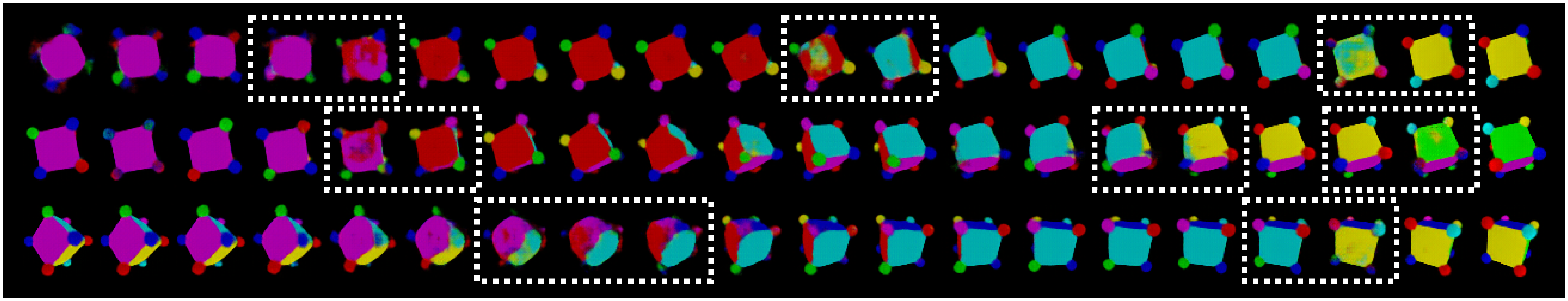}}
\subfigure[$\SO{3}$ latent space with S2S2 mean and action decoder]{\includegraphics[width=\textwidth]{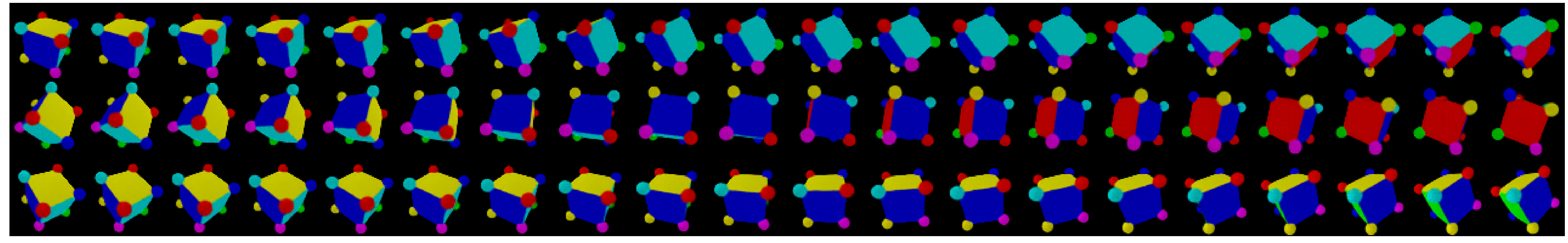}}
\caption{Three interpolations of two models. Discontinuities in the reconstructions of the Normal model are outlines by a dashed line.}
\label{fig:discontinuties}
\end{figure*}

We perform two experiments to investigate the importance of using a homeomorphic parameterization of the VAE in recovering the original underlying $\SO3$ manifold. In both experiments we explore three main axes of comparison: (1) manifold topology, (2) decoder architecture, and (3) specifically for the $\SO3$ models we compare different mean parameterizations as discussed in section \ref{sec:encoder}. For each model we compute a tight bound on the negative log likelihood (NLL) through importance sampling following \citet{burda2015importance}.

For manifold topology we examine VAEs with the Gaussian parameterization (\Nv-VAE), the hyperspherical parameterization of \citet{s-vae} (\Sv-VAE), and the $\SO3$ latent variable discussed above.
The two decoder variants are a simple MLP versus the group action decoder described in section \ref{sec:group-act-deco}. Lastly we explore mean parameterizations through unit Quaternions (q), the Lie algebra (alg), $\mathcal{S}^2 \times \mathcal{S}^1$ (s2s1), and $\mathcal{S}^2 \times \mathcal{S}^2$ (s2s2). These parameterizations are chosen to be either valid ($\mathcal{S}^2 \times \mathcal{S}^2$) or invalid (q, alg, $\mathcal{S}^2 \times \mathcal{S}^1$) for the purpose of investigating the soundness of our theoretical considerations and to compare their behaviour. Details and derivations on the properties of these different parameterizations can be found in Appendix \ref{ap:mean-param}.  

\subsection{Toy experiment} \label{sec:toy}
The simplest way of creating a non-linear embedding of $\SO3$ in a high dimensional space is through the representation as discussed in Section \ref{sec:group-act-deco}. 
The data is created in the following way: a fixed representation $W : \SO3 \to \mathbb{R}^{N \times N}$ is chosen, in this experiment this is three copies of the direct sum of the Wigner-D-matrices up to order 3, making the embedding space $\mathbb{R}^{64}$. Subsequently a single element $v$ of $\mathbb{R}^{64}$ is generated. The data set now consists of the vectors $W(R)v$, where $R \in \SO3$ sampled uniformly. Since the representation is faithful ($W$ is injective) the data points lie on a 3 dimensional submanifold of $\mathbb{R}^{64}$ homeomorphic to $SO(3)$.

In order to verify the ability of the models to correctly learn to encode from the embedded manifold to the manifold itself, we learn various variational and non-variational auto-encoders on this data set. The encoder is a 3 layer MLP, and for the decoder we use the group action decoder of Section \ref{sec:group-act-deco}. The same representation $W$ is used as in the data generation, but we learn $v$. In addition to the $\SO3$ models, we use a 3 dimensional normal, which we map to $\SO3$ using the ZYZ-Euler angles, and a $S^3$ von Mises-Fisher, which we map to $\SO3$ by identifying $S^3$ as the unit quaternions.

\paragraph{Results}
The quantitative results are shown in Table \ref{tab:toy}. We observe that the choice for the mean parametrization significantly impacts the ability of the model to correctly learn the manifold. The $\mathcal{S}^2 \times \mathcal{S}^2$ method strongly outperforms the competing methods in the non-variational Auto-Encoder achieving near-perfect reconstructions. Additionally, the metric indicating the continuity of the encoder, which we define in Appendix \ref{appendix:continuity}, shows it is the only model that does not have discontinuities in the latent space. These results are in line with our theoretical prediction outlined in Section \ref{sec:encoder} and Appendix \ref{ap:mean-param}.

The qualitative results in Figure \ref{fig:toy-latent} and Figures \ref{fig:toy-reconstruction}, \ref{fig:toy-discontinuity} in Appendix \ref{ap:additional-figures} tell a similar story. These plots are created by taking a $S^1$ subgroup of $SO(3)$ and making a $S^1$ submanifold in the data space using the same process with which the data was generated. This embedded trajectory is then encoded and reconstructed. The trajectory is divided in four equally sized partitions, each shown in a different color. We clearly see that only the $\mathcal{S}^2 \times \mathcal{S}^2$ method is able to learn a continuous latent space.

Moreover, the worst performing models are the 3 dimensional $\mathcal{N}$ and $\SO3$ algebra mean models. Interestingly, these share that at one intermediate point $\SO3$ is represented by $\mathbb{R}^3$. This indicates that using flat space to represent a non-trivial manifold results in a poorly structured latent space and worse reconstruction performance and Log Likelihoods.

\begin{table}[!tp]
\centering
\begin{tabular}{lccccc}
\toprule
\multirow{2}{*}{Algorithm} & \multicolumn{3}{c}{VAE} & \multicolumn{2}{c}{AE} \\
& NLL & recon & KL & recon & disc. \\
\midrule
$\SO3$-q      & 10.9 & 2.32 & \textbf{9.16} & 0.29 & .992 \\
$\SO3$-alg    & 13.4 & 6.24 & 9.36 & 4.02 & 1. \\
$\SO3$-s2s1   & 11.0 & 2.12 & 9.41 & 0.29 & 1. \\
$\SO3$-s2s2   & \textbf{10.7} & 1.81 & 9.21 & \textbf{0.01} & 0. \\
$\mathcal{N}$-3-dim      & 18.9 & 9.91 & 10.3 & 14.7 & 1. \\
$\mathcal{S}$-3-dim  & 13.6 & \textbf{1.79} & 11.8 & 0.27 & 1. \\
\bottomrule
\end{tabular}
\caption{Summary of results for the toy experiment for Variational Auto-Encoder (VAE) and Auto-Encoder (AE), including the discontinuity metric (disc.).}
\label{tab:toy}
\end{table}

\subsection{Sphere-Cube} \label{subsec:exp-single-obj}
For this experiment we learn auto-encoders on renderings of a cube. The cube is made highly asymmetrical through the colors of the faces and the colored spheres at the vertices. This should make it easier for the encoder to detect the orientation. This \emph{sphere-cube} is then rotated by applying uniformly sampled group elements from $\SO3$, to create a training set of 1M images. Ideally the model learns to correctly represent these encodings in the latent space.

The encoder consists of 5 convolutional layers, followed by one of the mean encoders and reparameterization methods. The decoder uses either the group action or a 3 layer MLP, both followed by a 5 deconvolutional layers. In order to balance reconstruction and the KL divergence in a controlled manner, we follow \citet{burgess2018understanding} and replace the negative KL term in the original VAE loss with a squared difference of the computed KL value and a target value. We found that a target value of 7 early in training to 15 at the end of the training gave good results. This allows the model to first organize the space and later become more certain of its predictions. We found that two additional regularizing loss terms were needed to correctly learn the latent space. Details can be found in Appendix \ref{appendix:regularizers}.

\paragraph{Results}
Quantitative results comparing the best performing $\SO3$ parameterization to \Nv-VAEs of diff dimensionality are shown in Table \ref{tab:vae-2}. Although the higher dimensional \Nv-VAEs are able to achieve competitive metrics compared to the best $\SO3$ model, they only learn to embed $\SO3$ in a high dimensional space in an unstructured fashion. As can be seen in in \ref{fig:discontinuties}, the $\SO3$ latent space with $\mathcal{S}^2 \times \mathcal{S}^2$ mean parameterization learns a nearly perfect encoding, while the 10 dimensional Normal learns disconnected patches of the data manifold.\footnote{Animated interpolations can be found at \url{https://sites.google.com/view/lie-vae}.}

It can be seen in Table \ref{tab:vae} that the results from the Toy experiment extend to this more complicated task. We observe that only the continuous encoding, $\mathcal{S}^2 \times \mathcal{S}^2$, achieves low log likelihood and reconstruction losses compared to the other mean parameterizations.

Lastly, we observe that the group action decoder yields significantly higher performance than the MLP decoder. This is in line with the hypotheses that using the group action encourages structure in the latent space.

\begin{table}[!tp]
\centering
\begin{tabular}{lcccc}
\toprule
Algorithm & NLL & ELBO & recon. \\
\midrule
$\SO3$-MLP-s2s2             & 123.6 & 144.6 & 129.6  \\
$\SO3$-action-s2s2         & \textbf{46.90} & \textbf{63.35} & \textbf{48.35}  \\
$\mathcal{N}$-MLP 3-dim & 140.7 & 157.7 & 142.7 \\
$\mathcal{N}$-MLP 10-dim     & 64.02 & 80.80 & 65.80 \\
$\mathcal{N}$-MLP 30-dim     & 55.7 & 74.37 & 59.37 \\ 

\bottomrule
\end{tabular}
\caption{Results on sphere-cube of $\SO3$ encodings and $\mathbb{R}^N$ embedding encodings. The $\SO3$ models employ both regularizers, the $\mathbb{R}^N$ models neither. This achieved the best respective performance.}
\label{tab:vae-2}
\end{table}

\begin{table}[!ht]
\centering
\begin{tabular}{lcccc}
\toprule
Algorithm & NLL & ELBO & recon. \\
\midrule
$\SO3$-MLP-q         & 111.8 & 140.1 & 135.1  \\
$\SO3$-MLP-alg       & 218.9 & 316.7 & 301.7  \\
$\SO3$-MLP-s2s1      & 106.0 & 144.5 & 129.5  \\
$\SO3$-MLP-s2s2      & 123.6 & 144.6 & 129.6  \\
$\SO3$-action-q      & 378.6 & 471.2 & 456.2  \\
$\SO3$-action-alg    & 241.2 & 333.2 & 318.2  \\
$\SO3$-action-s2s1   & 128.5 & 173.0 & 158.0  \\
$\SO3$-action-s2s2   & \textbf{46.90} & \textbf{63.35} & \textbf{48.35}  \\
\bottomrule
\end{tabular}
\caption{Summary of results comparing $\SO3$ mean parameterization and model decoder on sphere-cubes. The models employ both regularizers.}
\label{tab:vae}
\end{table}

\section{Discussion \& Conclusion} \label{sec:conlusion}
In this paper we explored the use of manifold-valued latent variables, by proposing an extension of the reparameterization trick to compact connected Lie groups. We worked out the implementation details for the specific case of $\SO3$, and highlighted the various subtleties that must be taken into account to ensure a successful parameterization of the VAE. Through a series of experiments, we showed the importance of matching the topology of the latent data manifold with that of the latent variables to induce a continuous, well-behaved latent space. Additionally we demonstrated the improvement in learned latent space structure by using a group action decoder, and the need for care in choosing an embedding space for the posterior distribution's mean parameter.

We believe that the use of $\SO3$ and other well-known manifold-valued latent variables could present an interesting addition to tackling problems in such fields as model based RL and computer vision. Moving forward we thus aim to extend this theory to other Lie groups such as $\operatorname{SE}(3)$. A limitation of the current work, and reparameterizing distributions on specific manifolds in general, is that it relies on the assumption of \emph{a priori} knowledge about the observed data's latent structure. Hence in future work our ambition is to find a general theory to learn arbitrary manifolds not known in advance.  

\section*{Acknowledgements}
The authors would like to thank Rianne van den Berg, Jakub Tomczak, and Yvan Scher for their suggestions and support in improving this paper.

\clearpage
\newpage 

\bibliography{main}
\bibliographystyle{icml2018}

\newpage
\newpage
\onecolumn

\appendix

\section{Additional Figures} \label{ap:additional-figures}

\begin{figure*}[!ht]
\centering
\subfigure[Ground truth]{\includegraphics[height=0.15\textwidth]{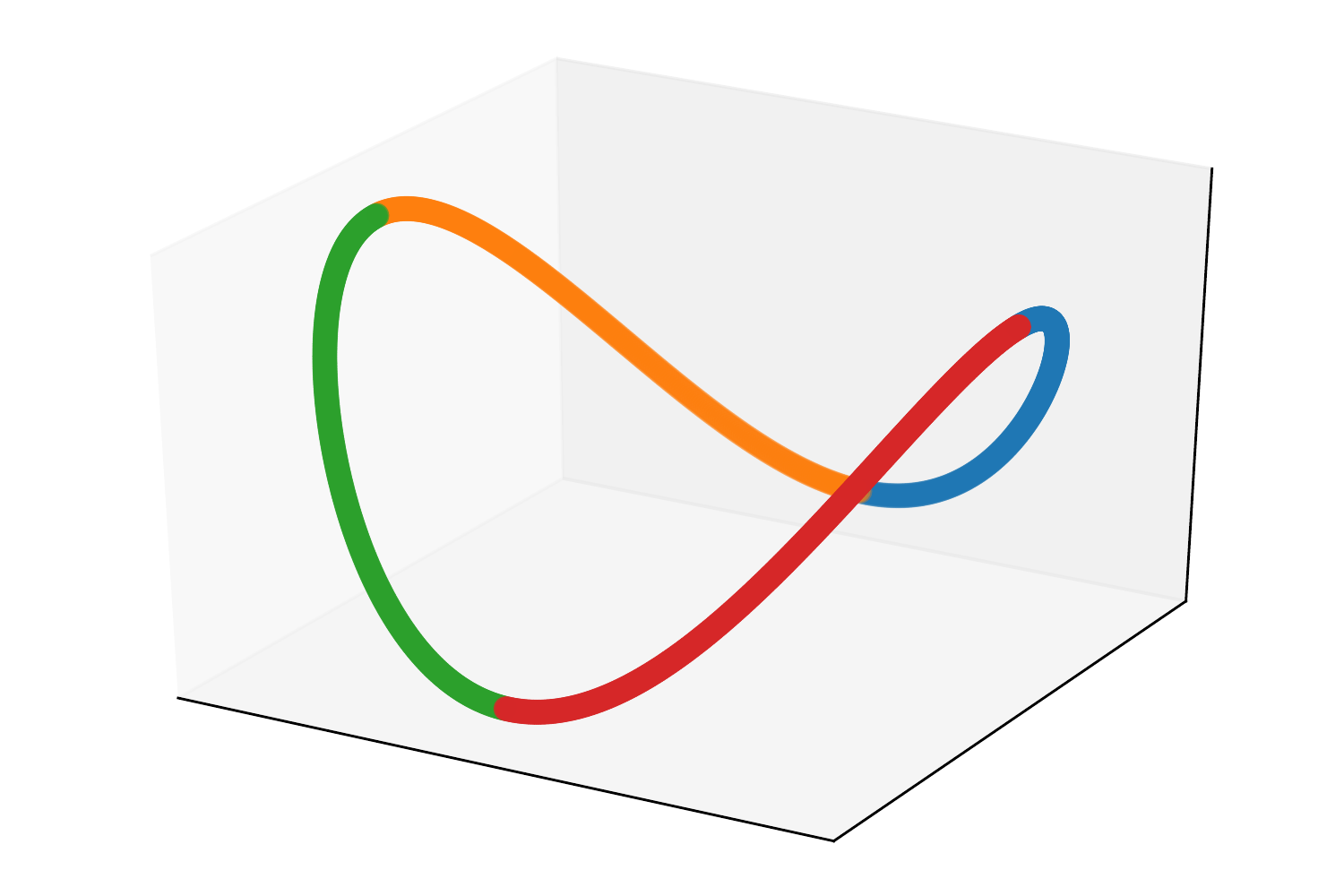}}
\subfigure[$\SO3$-action-s2s2]{\includegraphics[height=0.15\textwidth]{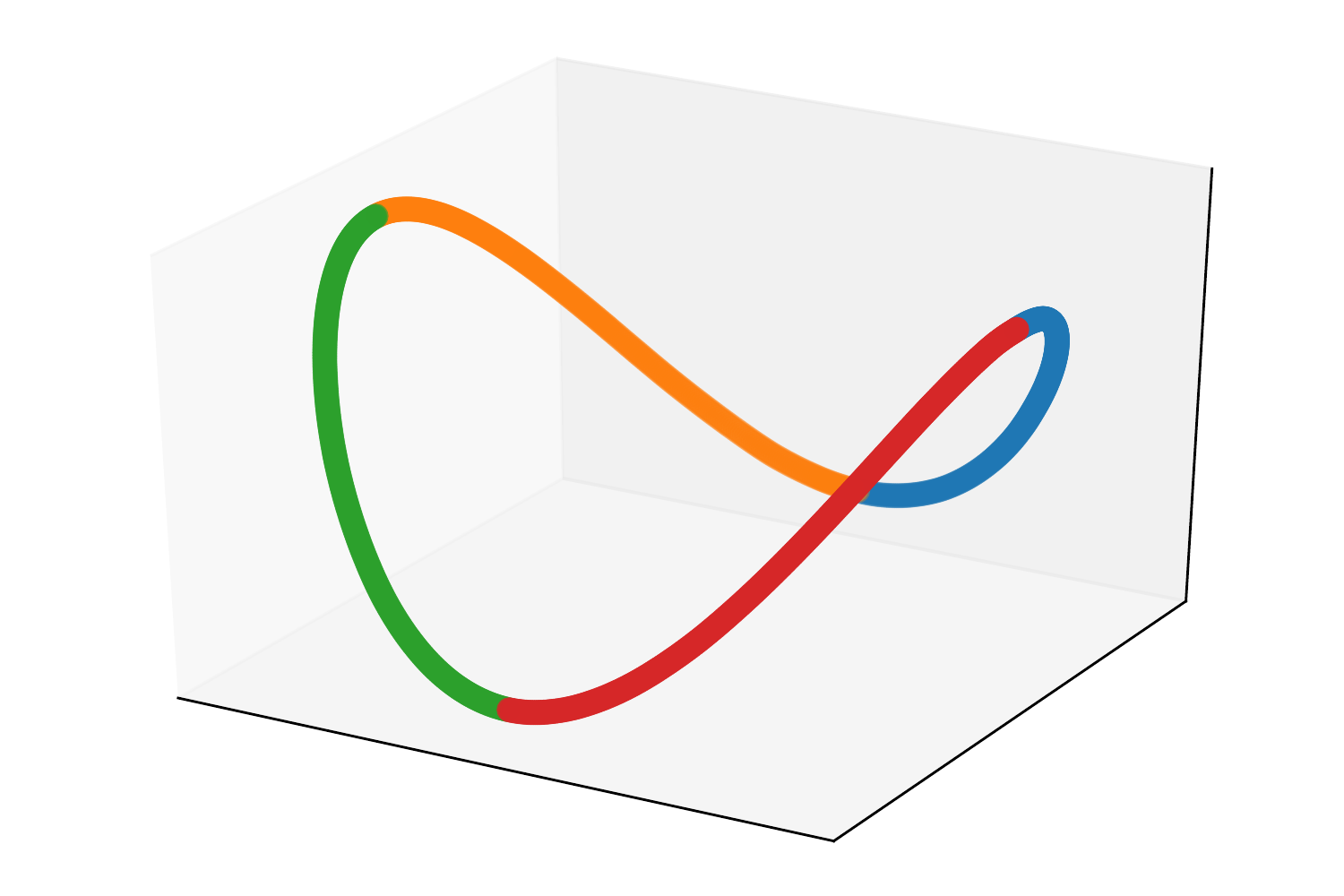}}
\subfigure[$\SO3$-action-alg]{\includegraphics[height=0.15\textwidth]{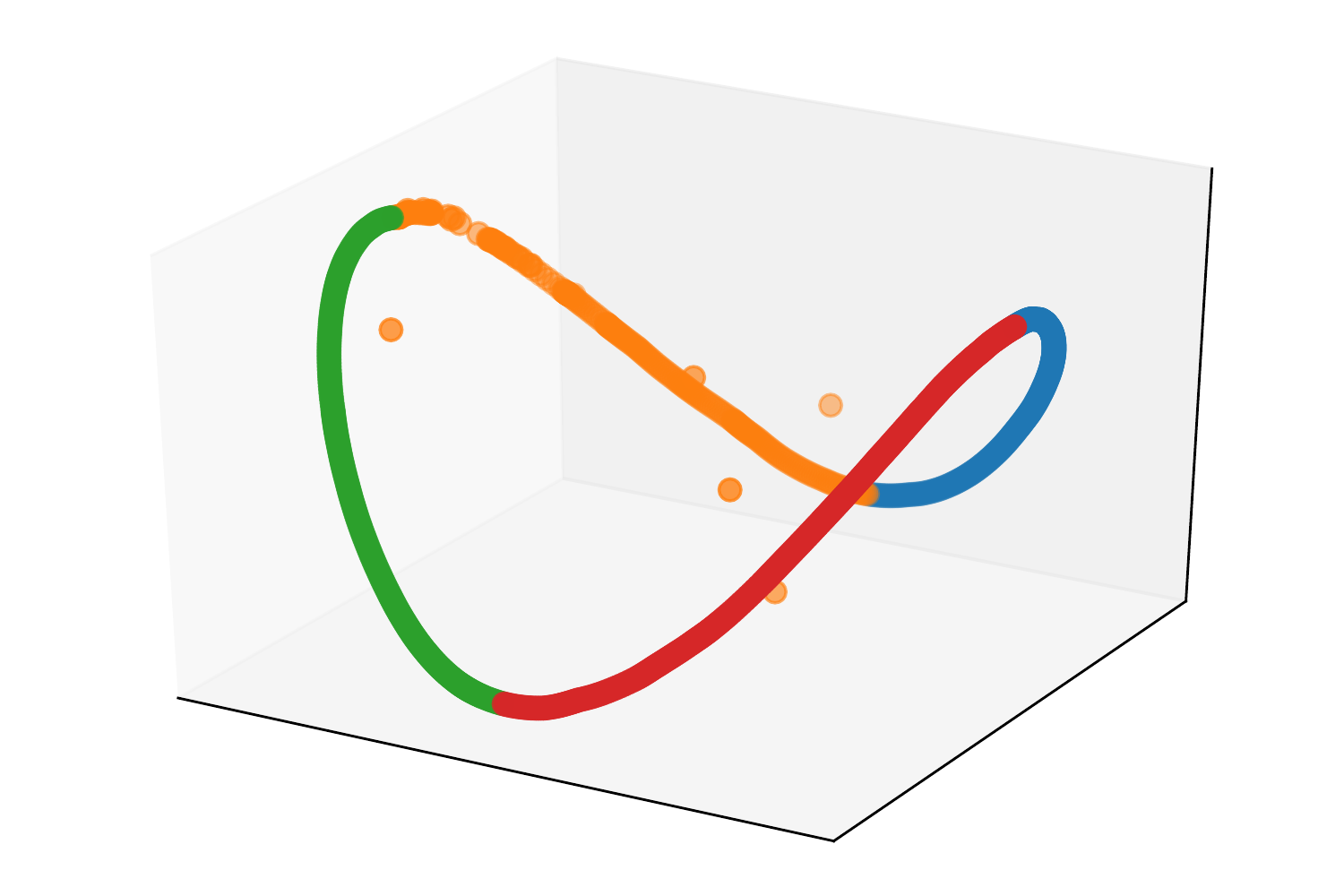}}
\subfigure[$\SO3$-action-q]{\includegraphics[height=0.15\textwidth]{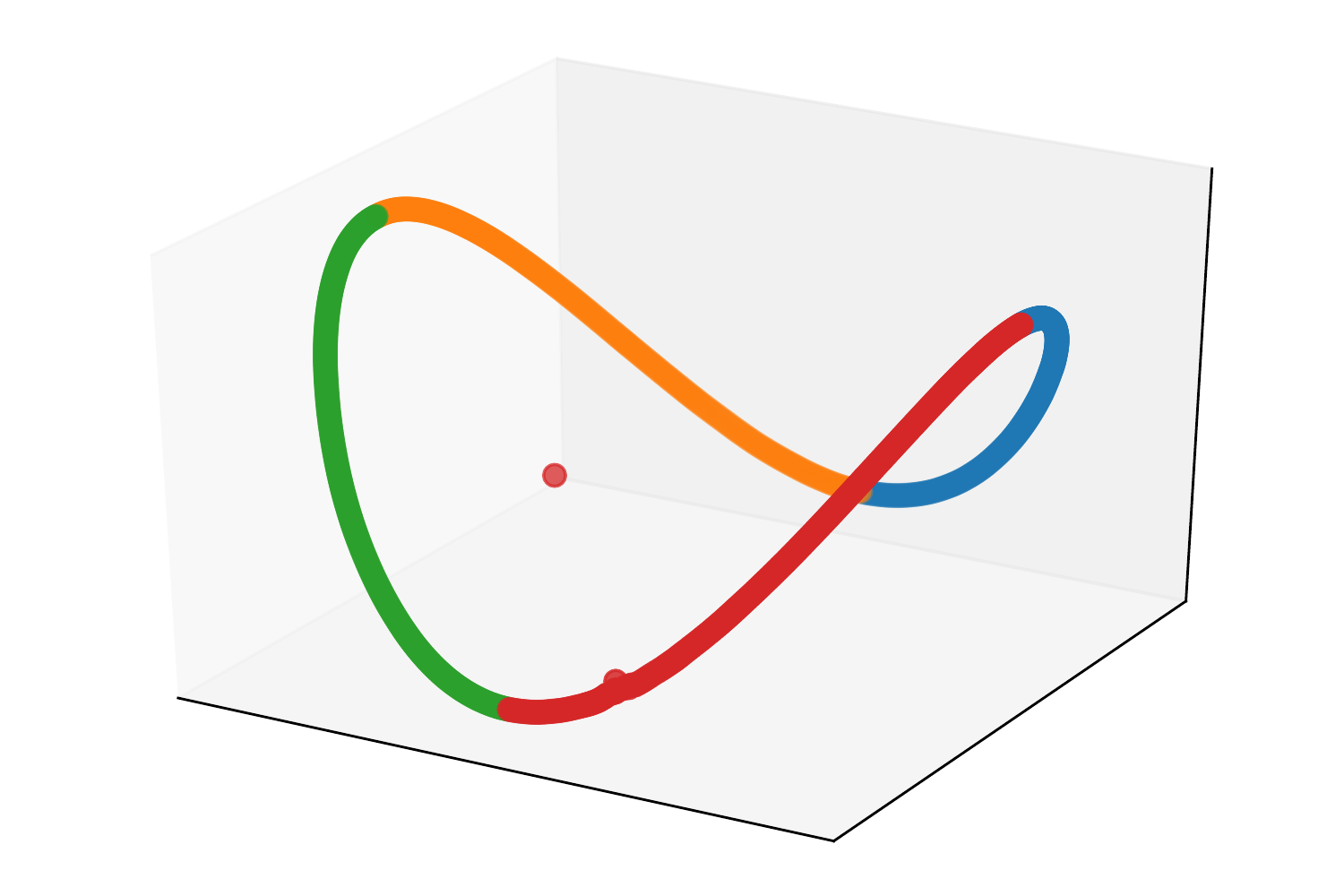}}
\subfigure[$\SO3$-action-s2s1]{\includegraphics[height=0.15\textwidth]{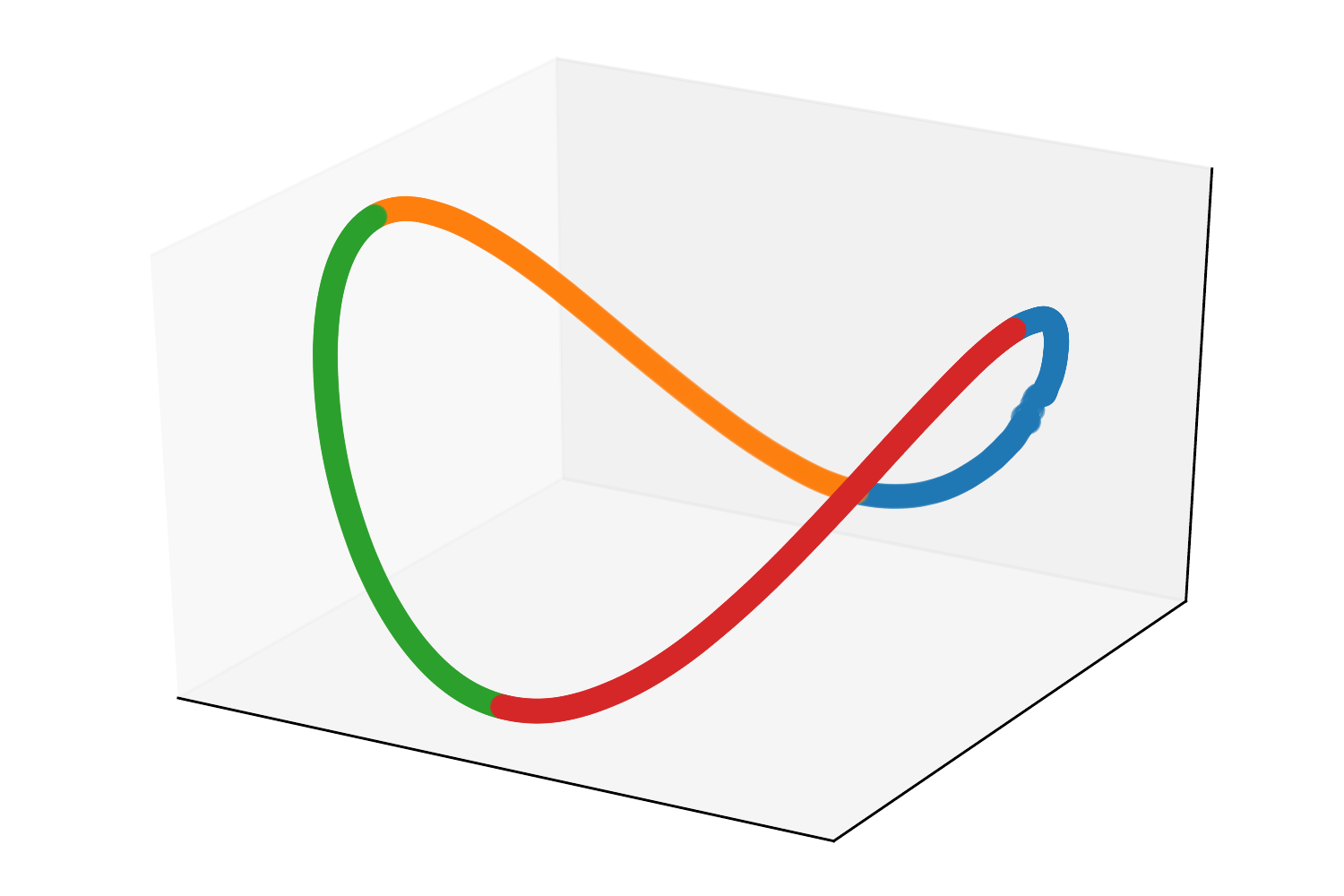}}
\subfigure[$\mathcal{N}$-action-3-dim]{\includegraphics[height=0.15\textwidth]{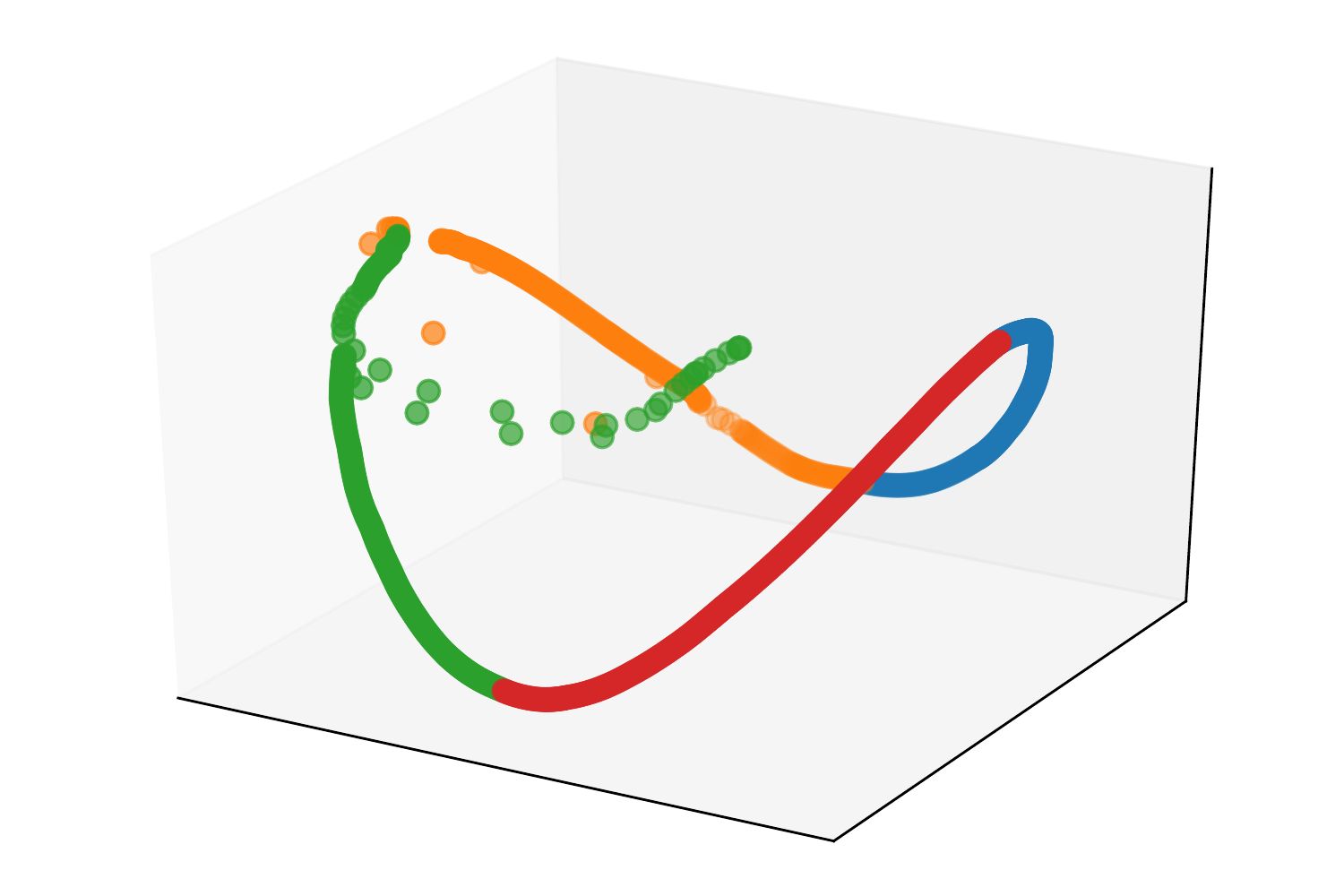}}
\subfigure[$\mathcal{S}$-action-3-dim]{\includegraphics[height=0.15\textwidth]{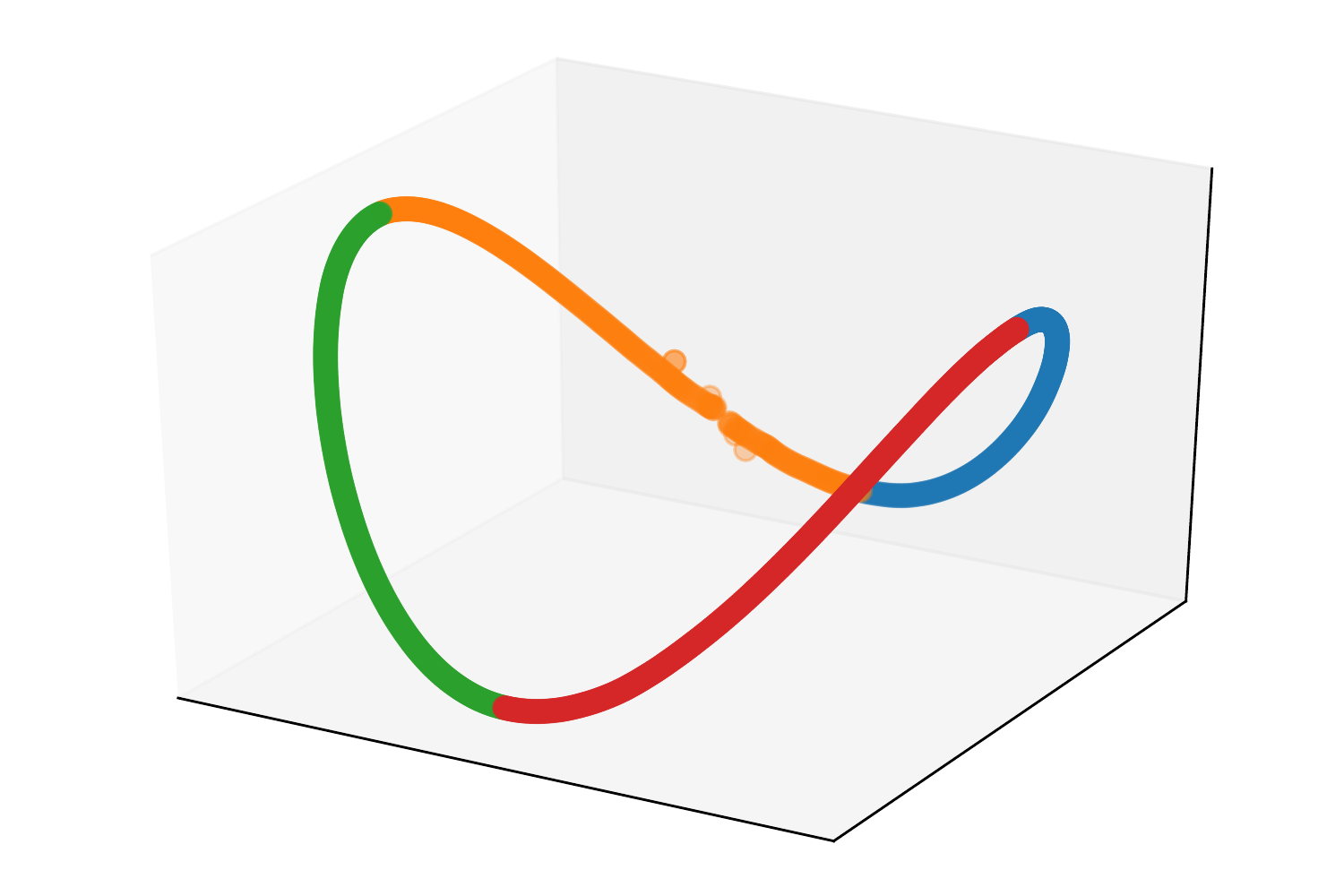}}
\caption{Reconstructions of a $S^1$ trajectory in the Toy data set. The $\mathbb{R}^{64}$ elements are mapped to 3D by Principal Component Analysis. See Section \ref{sec:toy} for details.}
\label{fig:toy-reconstruction}
\end{figure*}

\begin{figure*}[!ht]
\centering
\subfigure[$\SO3$-action-s2s2]{\includegraphics[height=0.15\textwidth]{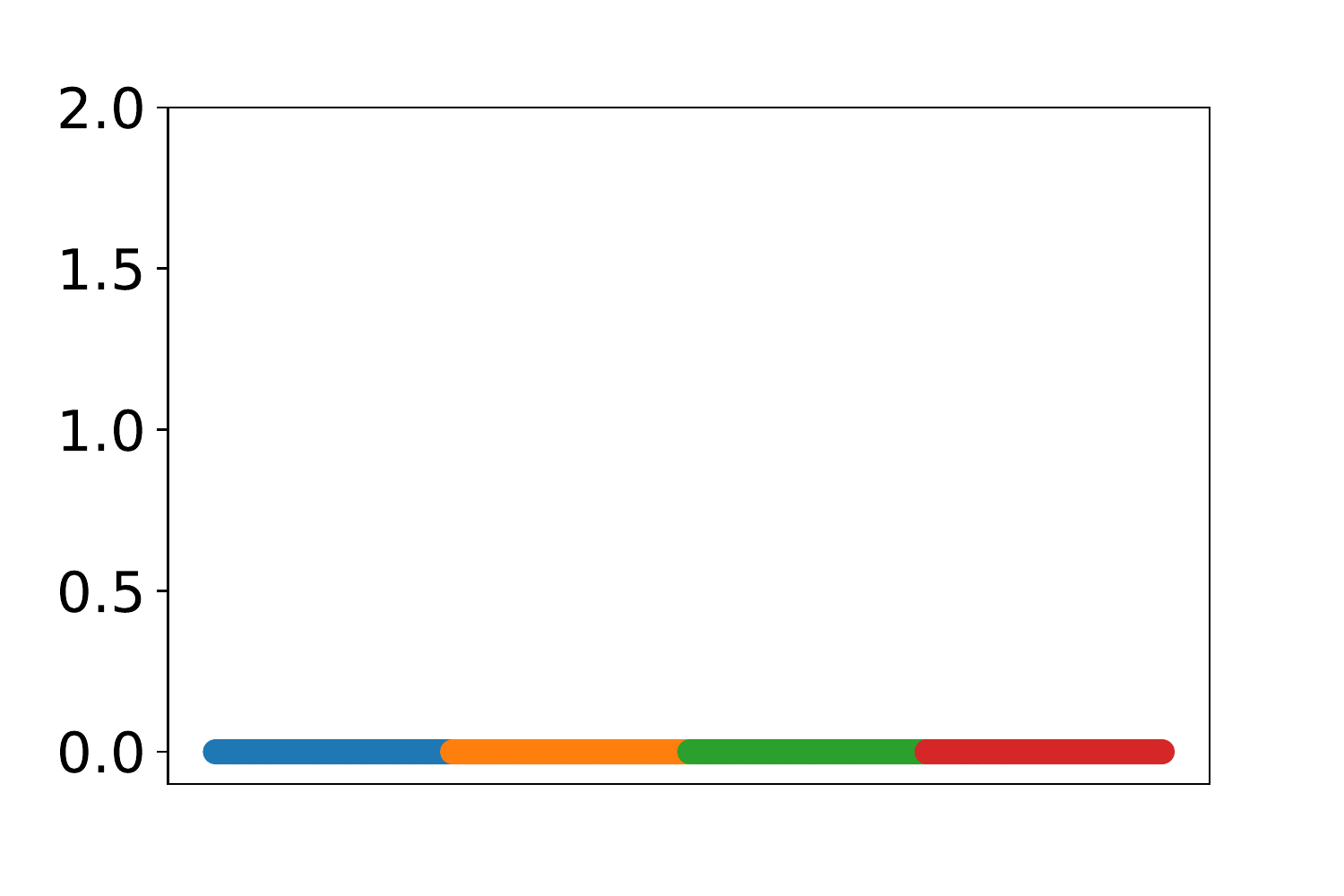}}
\subfigure[$\SO3$-action-alg]{\includegraphics[height=0.15\textwidth]{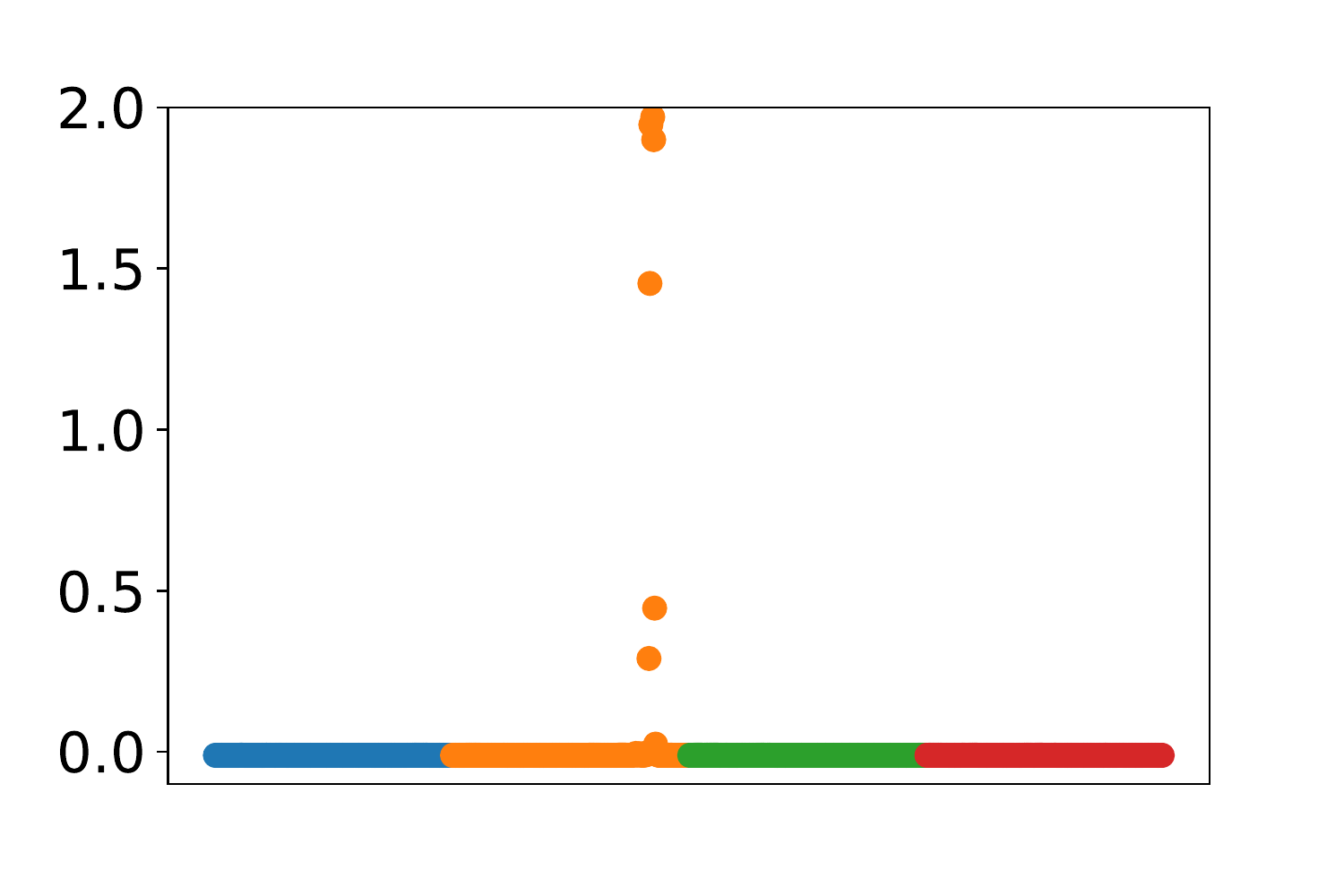}}
\subfigure[$\SO3$-action-q]{\includegraphics[height=0.15\textwidth]{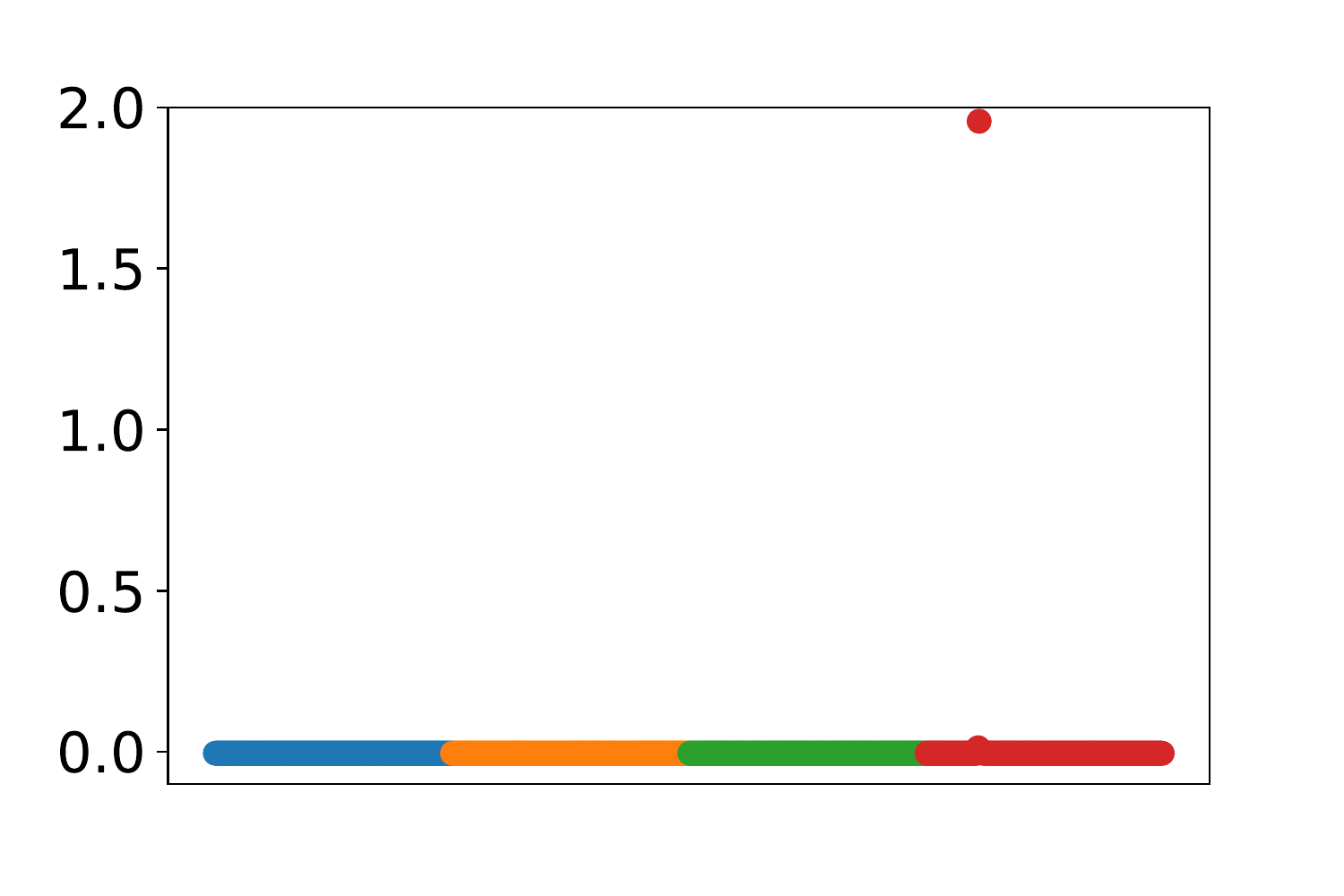}}
\\
\subfigure[$\SO3$-action-s2s1]{\includegraphics[height=0.15\textwidth]{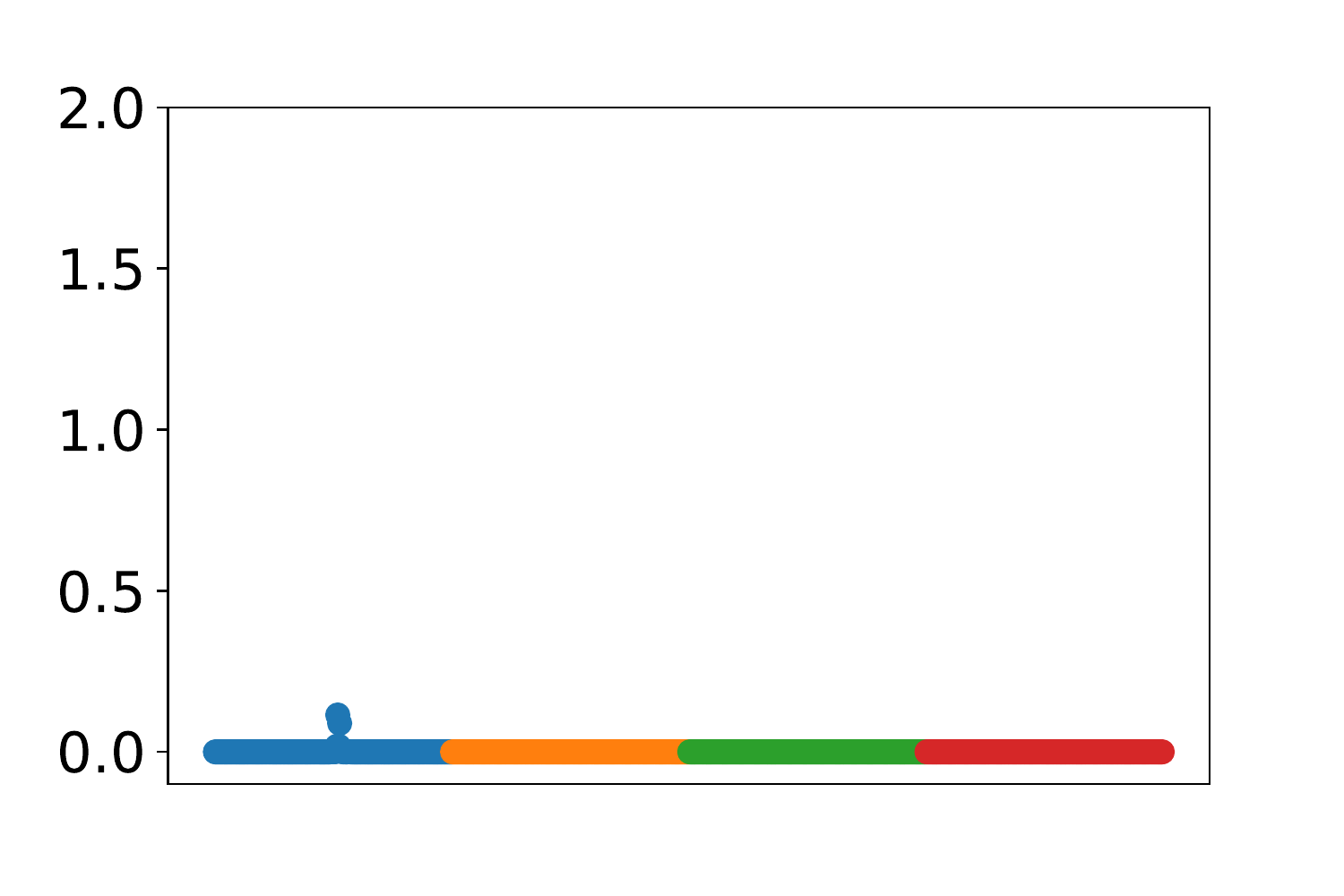}}
\subfigure[$\mathcal{N}$-action-3-dim]{\includegraphics[height=0.15\textwidth]{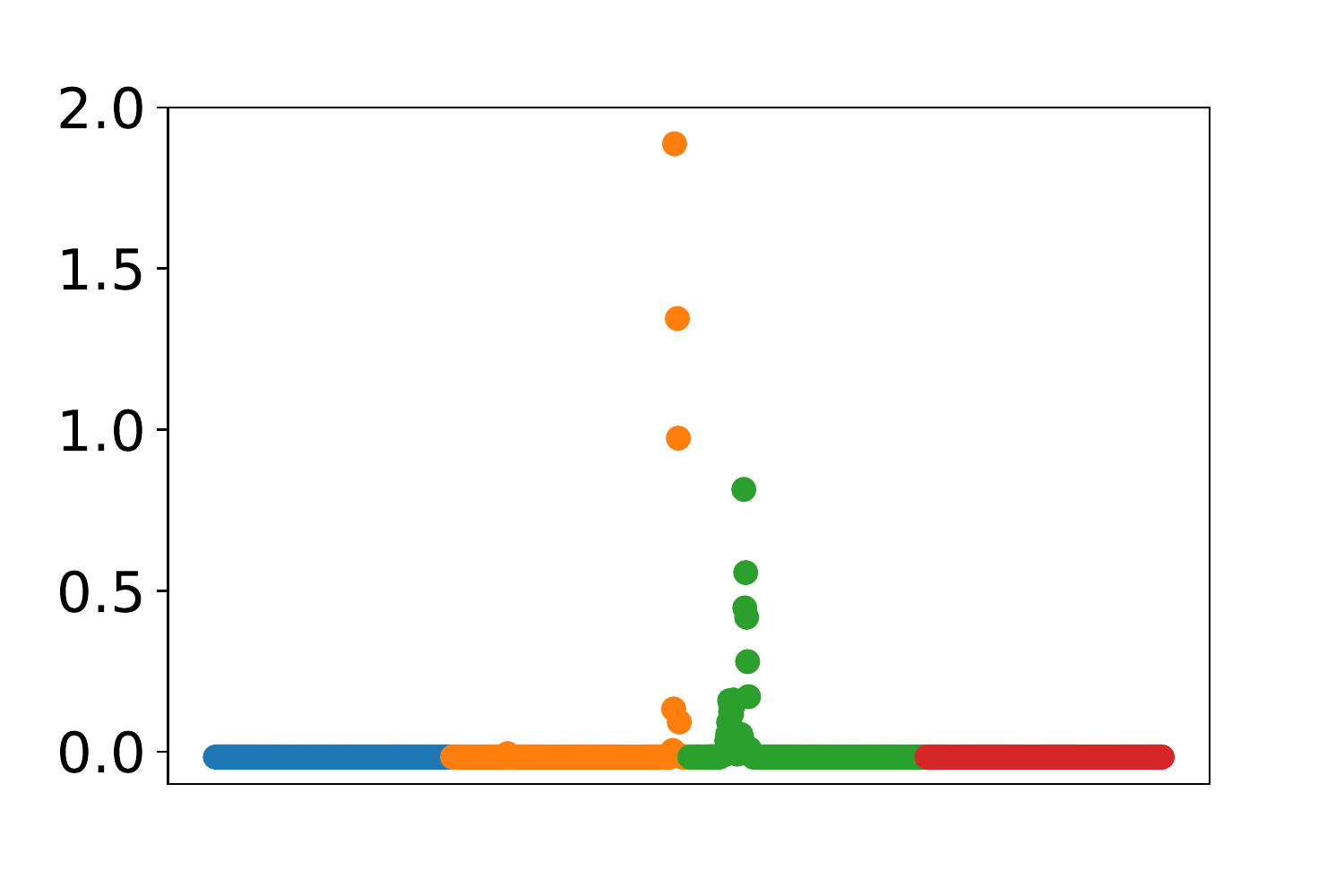}}
\subfigure[$\mathcal{S}$-action-3-dim]{\includegraphics[height=0.15\textwidth]{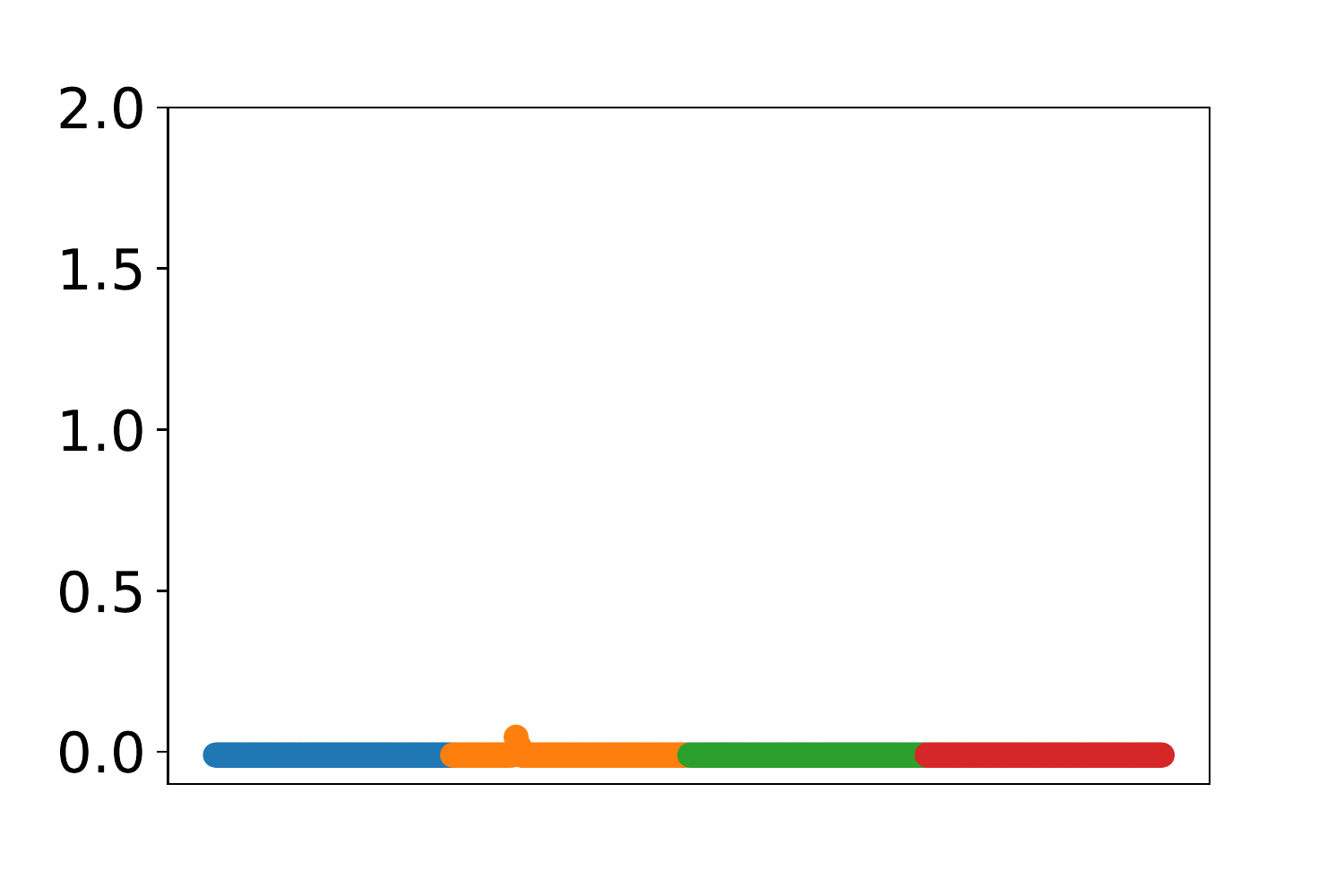}}
\caption{Discontinuities in the latent space along a $S^1$ trajectory in Toy data set. Shown is $\|f(x_{i+1})-f(x_i)\|^2$ for encoder $f$ along the trajectory. See Section \ref{sec:toy} for details.}
\label{fig:toy-discontinuity}
\end{figure*}

\section{Pushforward Measure $\SO3$} \label{appendix:pushforward-so3}

\begin{customthm}{1}
Let $(\mathbb{R}^3,\lambda, \mathcal{B}[\mathbb{R}^3] )$ the real space provided with the Lebesgue measure on the Borel algebra on $\mathbb{R}^3$. Let $(\SO3,\nu,\mathcal{B}[\SO3])$ The group of 3 dimensional rotations provided with the normalized Haar measure $\nu$ on the Borel algebra on $\SO3$. Consider then the probability measure $\mu:\mathcal{B}[\mathbb{R}^3]\to [0,1]$ absolutely continuous w.r.t $\lambda$ with density $r$. Consider the exponential map $\exp: \mathbb{R}^3 \to \SO3$ that is differentiable thus continuous thus measurable.
Let then $\exp_*(\mu)$ the pushforward of $\mu$ by the $\exp$ function. then $\exp_*(\mu)$ is absolutely continuous with respect of the Haar measure $\nu$.
($\exp_*(\mu) \ll \nu$) 
\end{customthm}

\begin{proof}

Define the sets:
\begin{equation}
A_k = \{x\in \mathbb{R}^3 : \|x\| \in (k\pi,(k+1)\pi)\} \quad B_k = \{x\in \mathbb{R}^3 : \|x\| =  k\pi\} \quad k\in \mathbb{N}
\end{equation}
Then note that:
\begin{equation}
\mathbb{R}^3 = \left(\dot{\bigcup_{k\in \mathbb{N}}}A_k\right)\dot{\cup}\left(\dot{\bigcup_{k\in \mathbb{N}}}B_k\right)
\end{equation}
And since $m(B_k) = \mu(B_k) = 0$,  then $\mu_{B_k}(E) = 0 \quad \forall E\in \mathcal{B}[\mathbb{R}^3]\ k\in\mathbb{N}$ . Therefore:
\begin{equation}
\mu(E) = \sum_{k\in\mathbb{N}}\mu_{A_k}(E) \quad \forall E\in \mathcal{B}[\mathbb{R}^3]
\end{equation}
Then we have $\mu = \sum_{k\in\mathbb{N}}\mu_{A_k}$. \newline 
Now consider the pushforward measure $\exp_*(\mu)$, we then have that:
\begin{align}
(\exp_*(\mu))(E) = \mu(\exp^{-1}(E)) = \sum_{k\in\mathbb{N}}\mu_{A_k}(\exp^{-1}(E)) = 
\sum_{k\in\mathbb{N}}(\exp_{*}(\mu_{A_k}))(E) = \\
\sum_{k\in\mathbb{N}}((\exp_{|A_k})_*(\mu))(E) 
\quad \forall E\in \mathcal{B}[\SO3]
\end{align}
Then we have $\exp_*(\mu) = \sum_{k\in\mathbb{N}}\exp_{*}(\mu_{A_k}) = \sum_{k\in\mathbb{N}}((\exp_{|A_k})_*(\mu))$. Where $\exp_{|A_k}$ is the $\exp$ function restricted to $A_k$. Moreover notice that $\exp_{|A_k}$ is a injective, therefore we can apply the change of variable formula:
\begin{align}
((\exp_{|A_k})_*(\mu))(E) = \int_{\exp_{|A_k}^{-1}(E)} r \ d\lambda = \int_{E}(r\circ\exp_{|A_k}^{-1})\cdot|J_{\exp_{|A_k}^{-1}}| \ d\nu
\end{align}
Then $(\exp_{|A_k})_*(\mu)\ll \nu$ and since $\exp_*(\mu)= \sum_{k\in\mathbb{N}}((\exp_{|A_k})_*(\mu))$ then $\exp_*(\mu)\ll \nu$. \qedhere
\end{proof}
The proof then tells us how to compute the Radon-Nikodym derivative of the pushforward with respect to the Haar measure. In fact:
\begin{equation}
    \frac{d(\exp_{|A_k})_*(\mu)}{d \nu} = (r\circ\exp_{|A_k}^{-1})\cdot|J_{\exp_{|A_k}^{-1}}| \quad, \quad 
    \frac{d\exp_*(\mu)}{d \nu} = \sum_{k\in\mathbb{N}} \frac{d(\exp_{|A_k})_*(\mu)}{d \nu} 
\end{equation}

Defining $\hat q : = \frac{d\exp_*(\mu)}{d \nu}$ we then have:
\begin{equation}
\hat q(R) = \sum_{k\in\mathbb{N}} (r\circ\exp_{|A_k}^{-1}(R))\cdot|J_{\exp_{|A_k}^{-1}}(R)| = \sum_{v\in\exp^{-1}(R)}r(v)\cdot|J_{\exp}(v)|^{-1}
\end{equation}
From \cite{chirikjian2010information} we have that 
\begin{equation}
 |J_{\exp}(v)| = \frac{2 - 2\cos\|v\|}{\|v\|^2} 
\end{equation}
We then have: 
\begin{equation}
\hat q(R) =  \sum_{v\in\exp^{-1}(R)}r(v)\frac{\|v\|^2}{2 - 2\cos(\|v\|)} 
\end{equation}

To then have an expression explicitly dependent on $R$ consider that 
\begin{equation}
\exp^{-1}_{|A_k}(R) = \frac{\exp^{-1}_{|A_0}(R)}{\|\exp^{-1}_{|A_0}(R)\|}(\|\exp^{-1}_{|A_0}(R)\| + 2k\pi) = \frac{\log(R)}{\|\log(R)\|}(\|\log(R)\| + k\pi) \quad \text{if } k \text{ is even}
\end{equation}
\begin{equation}
\exp^{-1}_{|A_k}(R) = \frac{\exp^{-1}_{|A_0}(R)}{\|\exp^{-1}_{|A_0}(R)\|}(\|\exp^{-1}_{|A_0}(R)\| + 2k\pi) = \frac{\log(R)}{\|\log(R)\|}(\|\log(R)\| + -(k+1)\pi) \quad \text{if } k \text{ is odd}
\end{equation}
Where we have defined $\log:= \exp^{-1}_{|A_0}(R)$. Moreover we then have:
\begin{equation}
|J_{\exp_{|A_k}^{-1}}(R)| = \frac{\|\exp^{-1}_{|A_k}(R)\|^2}{2 - 2\cos(\|\exp^{-1}_{|A_k}(R)\|)} = \frac{(\|\log(R)\| + k\pi)^2}{2 - 2\cos(\|\log(R)\|)} \quad \text{if } k \text{ is even}
\end{equation}
\begin{equation}
|J_{\exp_{|A_k}^{-1}}(R)| = \frac{\|\exp^{-1}_{|A_k}(R)\|^2}{2 - 2\cos(\|\exp^{-1}_{|A_k}(R)\|)} = \frac{(\|\log(R)\| - (k+1)\pi)^2}{2 - 2\cos(\|\log(R)\|)} \quad \text{if } k \text{ is odd}
\end{equation}
Putting everything together:
\begin{equation}\label{eq:post2}
\hat q(R) =  \sum_{k\in\mathbb{Z}}r\left(\frac{\log(R)}{\|\log(R)\|}(\|\log(R)\| + 2k\pi)\right)\frac{(\|\log(R)\| + 2k\pi)^2}{2 - 2\cos(\|\log(R)\|)}
\end{equation}    

Where from \cite{chirikjian2010information} we have:

\begin{equation}
 \log(R) = \frac{\theta(R)}{2 \sin(\theta(R))} \left(R - R^\top\right) \quad 
 \theta({R}) = \cos^{-1}\left(\frac{\text{tr}(R)-1}{2}\right)
\end{equation}
This gives us the final expression:
\begin{equation}
\hat q(R|\sigma) = \sum_{k \in \mathbb{Z}} r\left(\frac{\log(R)}{\theta(R)}(\theta(R) + 2k\pi)\right)\frac{(\theta(R) + 2k\pi)^2}{3 - \text{tr}({R})}
\end{equation}

\section{Entropy computation}
We oprimize MC estimates of the Entropy:
\begin{equation}
\mathbb{H}(q) = \mathbb{H}(\hat{q}) \approx -\frac{1}{N}\sum_{i=1}^N \log \hat{q}(R_i), \quad R_i \sim \hat{q}(R_i) \nonumber
\end{equation}
(Where we dropped dependency on the parameters for simplicity)
Then using, Equation \eqref{eq:post2}:
\begin{equation}
\mathbb{H}(q) = \mathbb{H}(\hat{q}) \approx -\frac{1}{N}\sum_{i=1}^N \log \sum_{k\in\mathbb{Z}}r\left(\frac{\log(R_i)}{\|\log(R_i)\|}(\|\log(R_i)\| + 2k\pi)\right)\frac{(\|\log(R_i)\| + 2k\pi)^2}{2 - 2\cos(\|\log(R_i)\|)} , \quad R_i \sim \hat{q}(R_i) \nonumber
\end{equation}
In the way we defined $\hat q$ we obtain samples from it in the following way:
\begin{equation}
    R_i = \exp(\vv_i) \quad \vv_i \sim r(\vv_i)
\end{equation}
Substituting it in in the previous expression we get:
\begin{align}
            \mathbb{H}(q)&\approx -\frac{1}{N}\sum_{i=1}^N \log  \hat{q}(\exp(\vv_i)) \nonumber
            \\
            &= -\frac{1}{N}\sum_{i=1}^N \log \sum_{k \in \mathbb{Z}} r\Bigl(\frac{\vv_i}{\|\vv_i\|} (\|\vv_i\|+ 2k\pi)\Bigl) \cdot \nonumber \frac{(\|\vv_i\|+ 2k\pi)^2 }{2 - 2\cos(\|\vv_i\|)},
\quad \vv_i \sim r(\vv_i)
\end{align}
Notice that this expression depends only on the samples from $r$ in the lie algebra

Assuming the density $r$ decays quickly enough to zero, the above infinite summation can be truncated. This is always the case for exponentially decaying distributions, like the Normal. 
The truncated summation can then can be computed using the {\it logsumexp} trick:

\begin{align}
            \mathbb{H}(q) \approx -\frac{1}{N}\sum_{i=1}^N \text{logsumexp}_k\left( \log r\Bigl(\frac{\vv_i}{\|\vv_i\|} (\|\vv_i\|+ 2k\pi)\Bigl) + \log \frac{(\|\vv_i\|+ 2k\pi)^2 }{2 - 2\cos(\|\vv_i\|)}\right),
\quad \vv_i \sim r(\vv_i)
\end{align}

\section{Mean parameterization} \label{ap:mean-param}
As discussed above, some requirements exist on $\pi: \mathcal{Y} \to \SO3$ for the encoder $\text{enc}^mu$ to correctly represent the data manifold.

We split $\pi$ in the composition of $\phi : \mathcal{Y} \to \mathcal{Y}'$ and $\psi : \mathcal{Y}' \to \SO3$, both generally discontinuous. We assume $\mathcal{Y}=\mathbb{R}^m$ to be a neural network output. The functions $\psi$ below are known ways to surjectively map to $\SO3$. $\phi$ are constructed to map from $\mathbb{R}^m$ to the domain of $\psi$.

We discuss the existence of a map $i: \SO3 \to \mathcal{Y}'$ such that it is a right inverse of $\psi$ ($\psi \circ i = \text{id}_{\SO3}$), which is necessary for the correct encoder to exist.

\begin{enumerate}
    \item {\bf Algebra} with $\mathcal{Y}=\mathcal{Y}'=\mathbb{R}^3$, $\phi=\text{id}$. This method simply uses the exponential map:
    \begin{align}
        \pi: \mathbb{R}^3 &\rightarrow \SO3\\
        \vv &\mapsto \exp(\vv_{\times})
    \end{align}
    It's inverses are the branches of the log map. However, a path in $\SO3$ that is a full rotation around a fixed axis is continuous in $\SO3$ but discontinuous in the algebra, when mapped with the log map. Thus the log map is not continuous.
    \item {\bf Quaternions} with $\mathcal{Y}=\mathbb{R}^4, \mathcal{Y}'=S^3$, $\phi(x)=x / \|x\|$. The unit Quaternions, which are homeomorphic to $S^3$ are a 'double cover' of $\SO3$, which means that a continuous surjective projection $\pi : S^3 \to \SO3$ exists that is two-to-one. The projection map can be found in \citet[Eqn. (5.60)]{chirikjian2000engineering}. Using the theory of Fiber Bundles (recognizing $S^3$ as a non-trivial principle bundle with base space $\SO3$), one can show that no embedding $i$ exists.
    
    \item {\bf s2s1}($S^2\times S^1$) with $\mathcal{Y}=\mathbb{R}^3 \times \mathbb{R}^2, \mathcal{Y}'=S^2 \times S^1$, $\phi(x, y)=(x / \|x\|,y / \|y\|) $. This is the map from an axis in $S^2$ and angle in $S^1$.
    \begin{align}
        \pi: \mathcal{S}^2\times \mathcal{S} &\rightarrow \SO3\\
       (\mathbf{u}, \vv) &\mapsto \mathbf{I} + v_2\mathbf{u}_\times + 
    (1 - v_1)\mathbf{u}_\times^2
    \end{align}
    For $i$, to be continuous, its image $i(\SO3)$ must be closed (as it is a compact subset of a Hausdorff space). Thus so must the set $A=i(\SO3) \cap S^2 \times \{\pi\}$. However, as $\psi(\mu, \pi)=\psi(-\mu,\pi)$ for $\mu \in S^2$, $A$ is a hemisphere (times a point) that does not contain its entire boundary, thus it is not closed and $i$ is not continuous.
    
    \item {\bf s2s2}($S^2\times S^2$) with $\mathcal{Y}=\mathbb{R}^3 \times \mathbb{R}^3, \mathcal{Y}'=S^2 \times S^2$, $\phi(x, y)=(x / \|x\|,y / \|y\|) $. This method creates two orthonormal vectors.
    \begin{align}
        \pi: \mathcal{S}^2\times \mathcal{S}^2 &\rightarrow \SO3\\
       (\mathbf{u}, \vv) &\mapsto \text{concat}(\mathbf{w_1}, \mathbf{w_2}, \mathbf{w_3})\\
       &\text{Where:}\\
       &\mathbf{w_1} = \mathbf{u}\\
       &\mathbf{w_2}' = \vv - \langle \mathbf{u},\vv \rangle \mathbf{u}\\
       &\mathbf{w_2} = \frac{\mathbf{w_2}'}{\|\mathbf{w_2}'\|}\\
       &\mathbf{w_3} = \mathbf{w_1}\times \mathbf{w_2}
    \end{align}
 Notice that there exists a continuous and injective map $i:SO(3)\to S^2\times S^2$. It simply consists of taking the first two rows of the matrix representation of the $SO(3)$ element (The third row is the vector product between the first two, so it can always be recovered). Moreover we have that $\pi \circ i = \text{Id}_{SO(3)}$
\end{enumerate}

\section{Continuity Metric} \label{appendix:continuity}
Consider a map $f:X\to Y$ where $X$, $Y$ are metric spaces with metrics $d_X$ and $d_Y$ respectively.
In order to compute the proposed continuity metric we take a {\it continuous } path $(x_i)_{i\in[N]}$, defined as $N$ pairwise close points, and compute the relative distances
\begin{equation}
 L_i = \frac{d_Y(f(x_{i+1}), f(x_{i}))}{d_X(x_{i+1}, x_{i})}
\end{equation}
From this we further compute the quantities
\begin{equation}
 M:= \max_i L_i 
 \quad\text{and}\quad
 P_\alpha := \alpha\text{-th percentile of } \{L_i: i \in [N-1]\}.
\end{equation}
By comparing these two values, we want to discover whether there is at least one outlier in the set of $L_i$.
Such outliers corresponds to a transition with a {\it big} jump, signalling a discontinuity point.
We define a path to be discontinuous if $M > \gamma  P_{\alpha}$. 

In order to capture stochastic effects we repeat the above procedure with several paths.
The final score is the fraction of discontinuous paths.  
In the practical implementation we chose $1000$ paths, using $\gamma = 10$ and $\alpha = 90$ ($90$th percentile).

\section{Group Action}\label{appendix:group-action}
For each degree $l \in \mathbb{Z}_{\ge0}$, the Wigner-D-matrix can be expressed in a real basis as $D^l : \SO3 \rightarrow \mathbb{R}^{(2l+1) \times (2l+1)}$. We choose the $n_l$ copies of each degree $l$ and stack the matrices in block-diagonal form to create our representation, which amounts to taking the direct sum of the representations.

The Wigner-D-matrices represent rotations of the Fourier modes of a signal on the sphere, which provides an interpretation for using the group action. We consider a real signal one the sphere $f : S^2 \rightarrow \mathbb{R}$. It has a generalized Fourier transformation \citep{chirikjian2000engineering}:

$$ f(s) = \sum_{l=0}^\infty (2l+1) \sum_{m,n=-l}^l \hat{f}^l_m D^l_{m0}(\alpha_s, \beta_s, 0) $$

where $\hat{f}$ are the Fourier components and $D$ is the Wigner-D-matrix. We use identity $D^l_{m0}(\alpha, \beta, 0)=Y^l_m(\alpha, \beta)$, where $\alpha,\beta$ are the first two Euler angles, to write the spherical harmonics that are the basis functions of the Fourier modes as Wigner-D-matrices. Then for a rotation $g \in \SO3$, using the homomorphism property:

\begin{align*}
f(g(s)) &= \sum_{l=0}^\infty (2l+1) \sum_{m=-l}^l \hat{f}^l_m \sum_{r=-l}^l D^l_{mr}(g)D^l_{r0}(\alpha_s, \beta_s, 0) \\
 &= \sum_{l=0}^\infty (2l+1) \sum_{m=-l}^l \left( \sum_{r=-l}^l D^l_{mr}(g) \hat{f}^l_m \right) D^l_{r0}(\alpha_s, \beta_s, 0)
\end{align*}
where $g(s)$ corresponds to rotating a point on the sphere.

We see that our method of using representations in the decoder corresponds to having the content latent code represent the Fourier coefficients of a virtual signal on the sphere.

\section{Regularizers}\label{appendix:regularizers}

Even when an appropriate mean parametrization is selected and proper behaviour of the decoder is encouraged by the group action decoder, the network can still learn a discontinuous latent space. To encourage it to learn the data manifold correctly, we employ two additional loss terms that act as regularizers. An ablative analysis of the effectiveness of these regularizers is shown in Table \ref{tab:regularizers}.

\begin{table}[!tp]
\centering
\begin{tabular}{lccccc}
\toprule
Regularizer & NLL & ELBO & recon. . \\
\midrule
Neither & 235.24 & 246.2 & 231.2\\
Equivariance & 75.62 & 93.76 &  78.76\\
Continuity & 87.36 & 125.6 & 110.6\\
Both & \textbf{45.18} & \textbf{60.62} & \textbf{45.62}\\
\bottomrule
\end{tabular}
\caption{Ablative analysis of the regularizers. The has model uses $\SO3$ latent space, the group action decoder and the S2S2 mean.}
\label{tab:regularizers}
\end{table}

\subsection{Equivariance regularizer}
If the 3D object on which $\SO3$ acts is centered in the frame, then a $S^1$ subgroup $H$ of $\SO3$ exist such that its action corresponds to the rotations whose axis is orthogonal to the camera frame. For any $R \in \SO3$, angle $\theta$, decoder $g : \SO3 \rightarrow \mathbb{R}^N$, action of subgroup $H$ on the latent space, $\psi_\theta: \SO3 \to \SO3$ and action on the pixels through planar rotations $\phi_\theta:  \mathbb{R}^N \to \mathbb{R}^N$, we have equivariance relationship:
\begin{equation}
    g(\psi_\theta(R)) = \phi_\theta(g(R)) \label{eq:equivariance}
\end{equation}

This equivariance is shown in Figure \ref{fig:equivariance}. The relationship is exact if the object is centered, $\SO3$ acts on all pixels and if the camera is orthographic (located infinitely far away from the subject). If the object is off center, the pixel rotation can be performed around a learned center point. If the images have a rotation-invariant background, a learned mask can be applied. If the camera is not orthographic, the equivariance relationship is not exact, but approximate. The decoder is regularized by enforcing Equation \eqref{eq:equivariance} through  a mean squared error loss on the pixels for uniformly sampled $R \in \SO3$ and $\theta \in S^1$. We choose $\psi_\theta$ to correspond to rotation around the $z$-axis.

This regularizer helps align all rotations in each $S^1$ orbit, but does not help in correctly aligning the orbits among each other. Thus we reduce the problem from aligning $\SO3$ to aligning $\SO3 / S^1 \cong S^2$, since the cosets of $\SO3$ after the orbit are identified, are homeomorphic to the sphere.

\begin{figure}
    \centering
    \includegraphics[width=\textwidth]{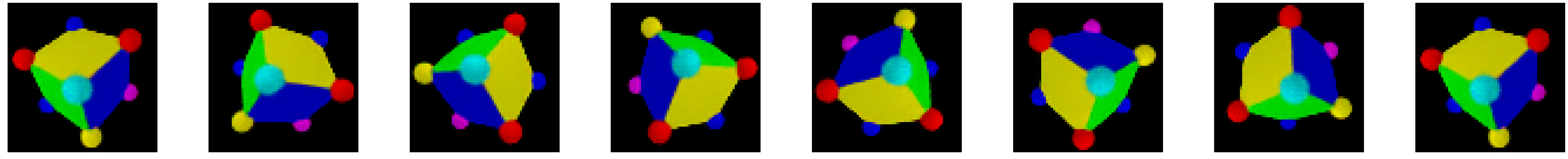}
    \caption{Decodings of an orbit of the equivariance subgroup $S^1$ whose axis is orthogonal to the camera frame. This orbit corresponds to planar rotations of the pixels.}
    \label{fig:equivariance}
\end{figure}

\subsection{Continuity regularizer}
If the learner is provided with pairs images that are nearby with respect to the manifold metric, the encoder can be regularized by penalizing differences in the encodings of the two inputs. This is done by penalizing the mean squared error of the Frobenius norms of the two encoded rotation matrices, which is a proper metric on the $\SO3$ manifold.

This simplifies the problem from unsupervised learning on i.i.d. samples to learning a VAE on two frame samples from random trajectories of data lying on the $\SO3$ manifold.

\end{document}